\definecolor{newcolor}{rgb}{.8,.349,.1}
\newcommand\sgn[0]{\mathrm{sgn}}
\newcommand\R[0]{\mathcal{R}}
\newcommand\N[0]{\mathcal{N}}
\newcommand\Ap[0]{\mathcal{A}}
\newcommand\M[0]{\mathcal{M}}
\newcommand\A[0]{A}
\newcommand\Q[0]{Q}
\newcommand\PH[0]{\phi}
\newcommand\cc[0]{\bm{c}}
\newcommand\x[0]{\bm{x}}
\newcommand\y[0]{\bm{y}}
\newcommand\rr[0]{\bm{r}}
\newcommand\avector[0]{\bm{a}}
\newtheorem{theo}{Theorem}
\newcommand\fns{\footnotesize}
\newcommand{\pg}[1]{}
\newcommand{\rej}{\mathord{\includegraphics[height=1.6ex]{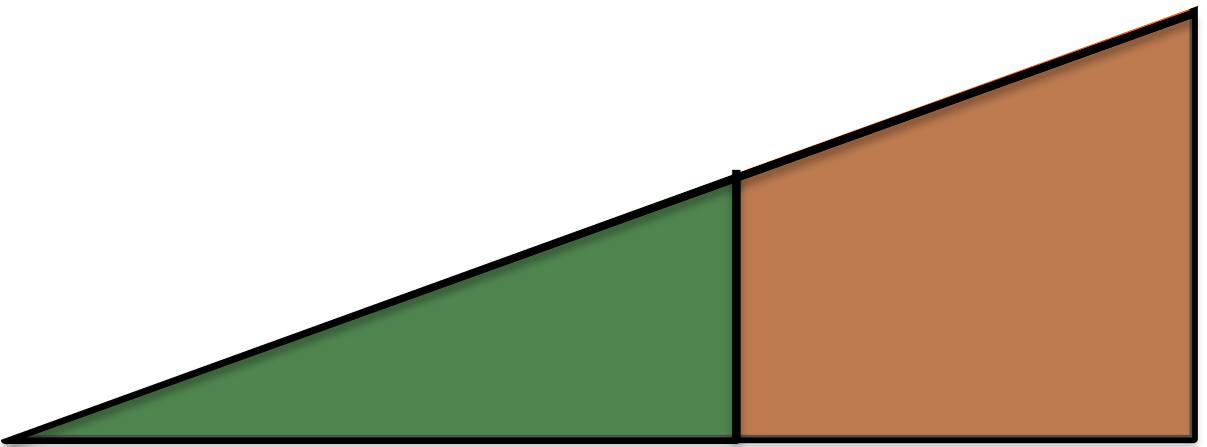}}}
\newcommand{\rc}{\mathord{\includegraphics[height=1.6ex]{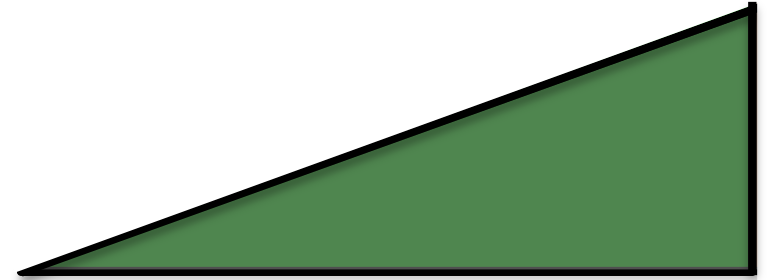}}}
\newcommand{\rnc}{\mathord{\includegraphics[height=1.6ex]{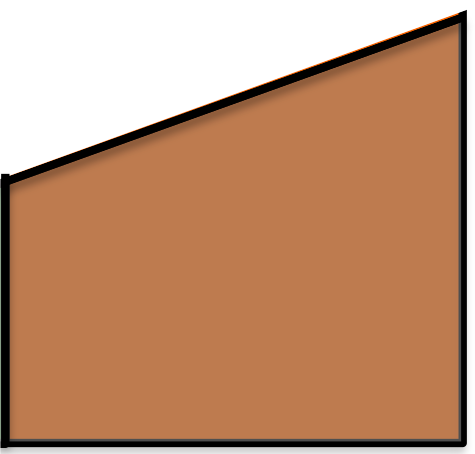}}}
\newcommand{\all}{\mathord{\includegraphics[height=3.2ex]{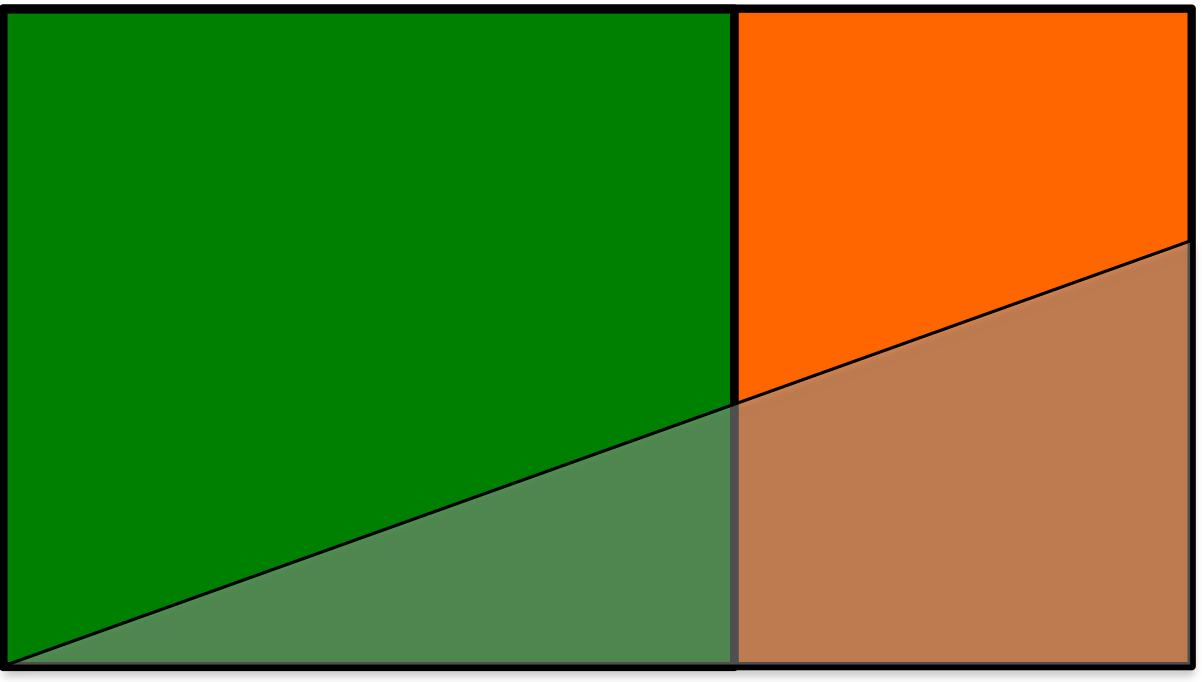}}}
\newcommand{\nrej}{\mathord{\includegraphics[height=3.2ex]{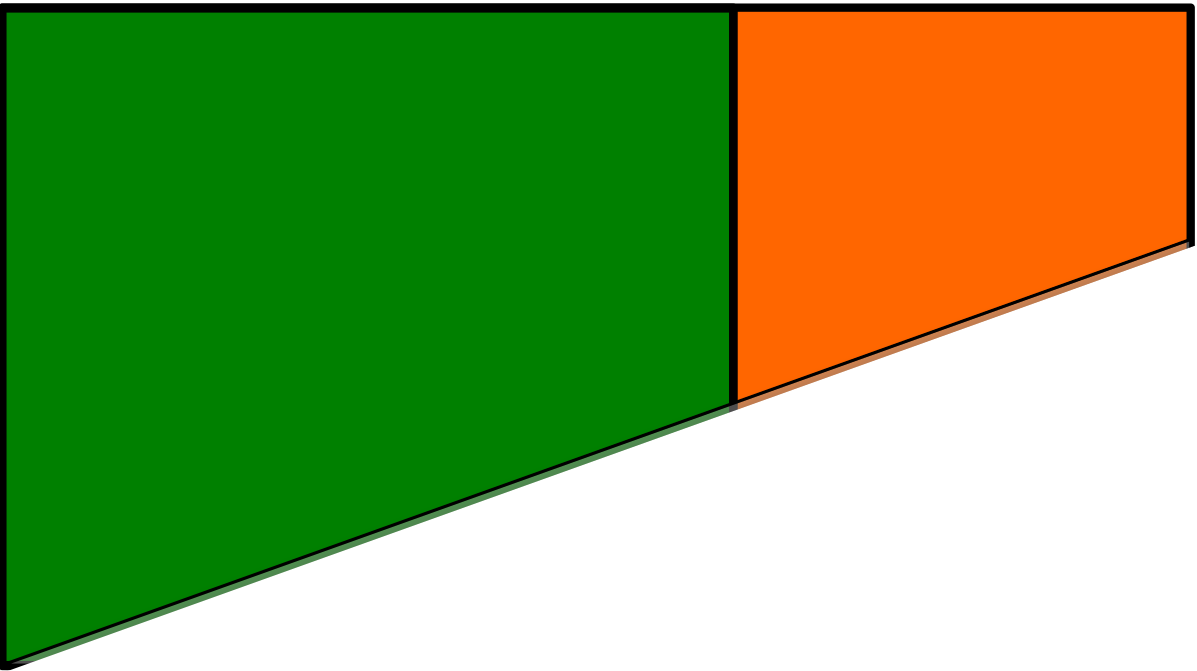}}}
\newcommand{\nrejc}{\mathord{\includegraphics[height=3.2ex]{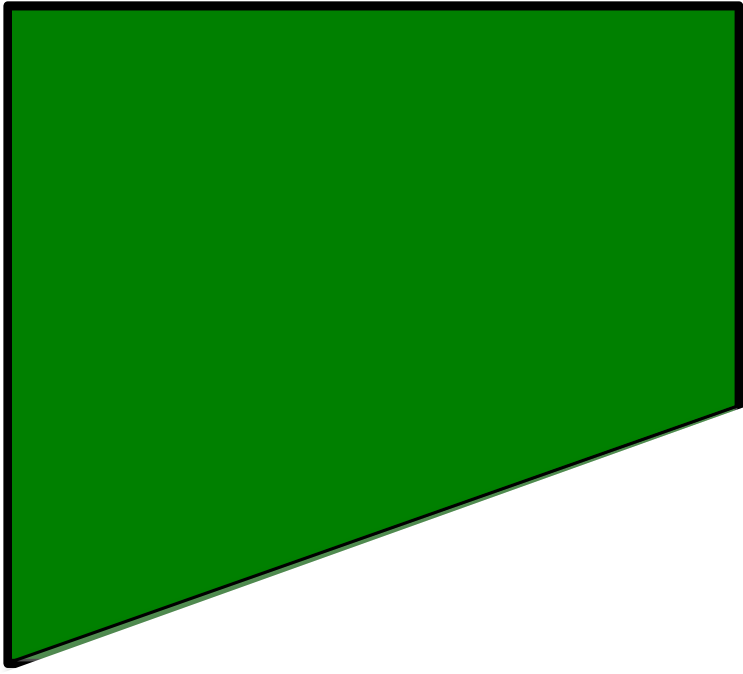}}}
\newcommand{\qtop}{\mathord{\includegraphics[height=3.2ex]{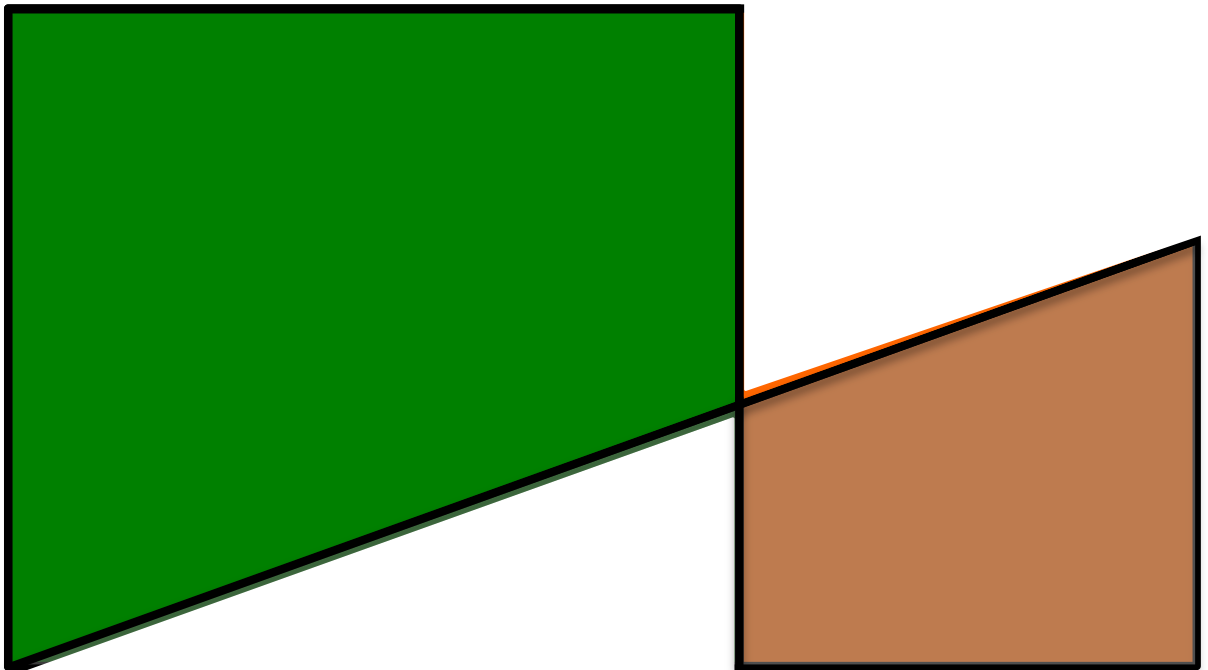}}}
\newcommand{\cor}{\mathord{\includegraphics[height=3.2ex]{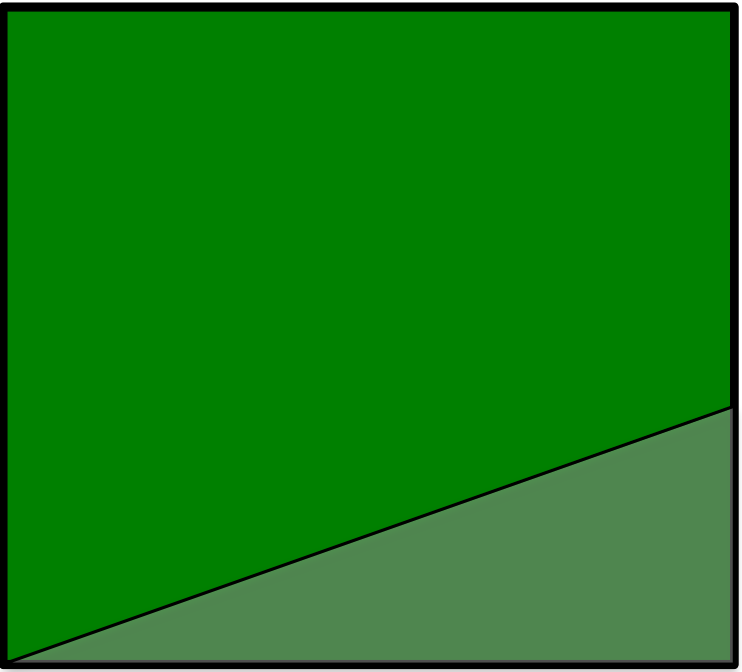}}}
\newcommand{\nc}{\mathord{\includegraphics[height=3.2ex]{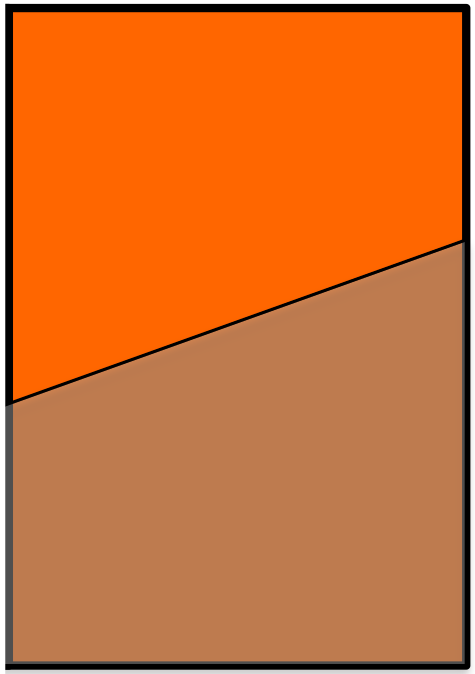}}}
\begin{document}
\title{Performance measures for classification systems with rejection}
\author{Filipe Condessa, Jos\'{e} Bioucas-Dias, and Jelena Kova\v{c}evi\'{c} 
\thanks{The authors gratefully acknowledge support from the NSF through award 1017278 and the CMU CIT Infrastructure Award. Work partially supported by grant  SFRH/BD/51632/2011, from Funda\c{c}\~{a}o para a Ci\^{e}ncia e Tecnologia and the CMU-Portugal (ICTI) program.. Filipe Condessa is with Instituto de Telecomunica\c{c}\~oes and the Dept. of Electrical and Computer Engineering  at Instituto Superior T\'ecnico, University of Lisbon, Portugal, Center for Bioimage Informatics and the Dept. of Electrical and Computer Engineering at Carnegie Mellon University, Pittsburgh, PA, condessa@cmu.edu.
 Jos\'{e} Bioucas-Dias is with Instituto de Telecomunica\c{c}\~oes and Dept. of Electrical and Computer Engineering at Instituto Superior T\'ecnico, University of Lisbon, Portugal, bioucas@lx.it.pt.
Jelena Kova\v{c}evi\'{c} is with the Dept. of Electrical and Computer Engineering, Center for Bioimage Informatics, and the Dept. of Biomedical Engineering at Carnegie Mellon University, Pittsburgh, PA, jelenak@cmu.edu.}
}
\maketitle

\begin{abstract}Classifiers with rejection are essential in real-world applications where misclassifications and their effects are critical.
However, if no problem specific cost function is defined, there are no established measures to assess the performance of such classifiers.
We introduce a set of desired properties for performance measures for classifiers with rejection, based on which we propose
a set of three performance measures for the evaluation of the performance of classifiers with rejection that satisfy the desired properties.
The \emph{nonrejected} accuracy measures the ability of the classifier to accurately classify nonrejected samples; the \emph{classification quality} measures the correct decision making of the classifier with rejector; and the \emph{rejection quality} measures the ability to concentrate all misclassified samples onto the set of rejected samples.
From the measures, we derive the concept of \emph{relative optimality} that allows us to connect the measures to a family of cost functions that take into account the trade-off between rejection and misclassification.
We illustrate the use of the proposed performance measures on classifiers with rejection applied to synthetic and real-world data.
\end{abstract}

\section{Introduction}
\label{sec:intro}
\pg{Classification with rejection as an interesting problem}
Classification with rejection is a viable option in real world applications of machine learning and pattern recognition, where the presence and cost of errors can be detrimental to performance.
This includes situations where the need to classify, in other words, when the cost of misclassifying is high (as in automated medical diagnosis~\cite{QuevedoRPL:12,CondessaBCOK:13}  or in landcover classification in remote sensing~\cite{CondessaBK:15a,CondessaBK:15c}), or where samples might be of no interest to the application (as in image retrieval \cite{VailayaFJZ:01}).
A classifier with rejection can also cope with unknown information, reducing the threat posed by the existence of unknown samples or mislabeled training samples that hamper the classifier's performance.
\pg{Inability to assess performance of classification with rejection}

\pg{Chow's design}
Classification with rejection was first analyzed in \cite{Chow:70}, where a rule for optimum error-reject trade-off was presented, Chow's rule.
In a binary classification setting, Chow's rule allows for the determination of a threshold for rejection such that the classification risk is minimized.
This requires both the knowledge of the \emph{a posterior} probabilities and the existence of a cost function that specifies the cost of misclassification and the cost of rejection.

\pg{Wegkamp and Fumera's design}
Multiple other designs for incorporating rejection into classification exist.
In a binary classification setting, the reject option can be embedded in the classifier.
An embedded reject option is possible through a risk minimization approach with the use of a hinge function, such as in \cite{ Wegkamp:07,BartlettW:08,YuanW:10}, to minimize classification risk.
It can also be achieved with support vector machines with embedded reject options, as described in~\cite{FumeraR:02,GrandvaletRKC:09,WegkampY:11}.
These embedded designs can also be extended to a rejection framework in nonbinary classification setting~\cite{PillaiFR:13}.

There is no standard measure for the assessment of the performance of a classifier with rejection.
Accuracy-rejection curves, used in~\cite{FumeraRG:00,FumeraR:02,FumeraR:04,SousaC:13}, and their variants based on the analysis of the $F_1$ score, used in ~\cite{FumeraPR:03,PillaiFR:13}, albeit popular in practical applications of classification with rejection have significant drawbacks.
Obtaining sufficient points for an accuracy rejection curve might not be feasible for classifiers with embedded reject option, which require retraining the classifier to achieve a different rejection ratio, or for classifiers that combine contextual information with rejection, such as~\cite{CondessaBCOK:13,CondessaBK:15c}.
This means that accuracy-rejection curves and the $F_1$ rejection curves, in the real world, are not able to describe the behavior of the classifier with rejection in all cases.

In~\cite{LandgrebeTPD:06}, a different approach is taken. A 3D ROC (receiver operating characteristic) plot of a 2D ROC surface is obtained by decomposing the false positive rate into false positive rate for outliers belonging to known classes and false positive rate for outliers belonging to unknown classes, with the VUC (volume under the curve) as the performance measure.
The use of ROC curves for the analysis of the performance suffers from the same problems associated with accuracy-rejection curves.

To fill this gap, we propose a set of desired properties for performance measures for classifiers with rejection, and a set of three performance measures that satisfy those properties.

A performance measure that evaluates the performance of a rejection
mechanism given a classifier should satisfy the following:
\begin{itemize}
\item Property I --- be a function of the fraction of rejected samples;
\item Property II --- be able to compare different rejection mechanisms working at the same fraction of rejected samples;
\item Property III --- be able to compare rejection mechanisms working at a different fractions of rejected samples when one rejection mechanism outperforms the other;
\item Property IV --- be maximum for a rejection mechanism that no other feasible rejection mechanism outperforms, and minimum for a rejection mechanism that all other feasible rejection mechanisms outperform.
\end{itemize}

These properties rely on being able to state whether one rejection mechanism qualitatively outperforms the other.
If a cost function exists that takes in account the cost of rejection and misclassification, the concept of outperformance is trivial, and this cost function not only satisfies the properties but is also the ideal performance measures for the problem in hand.
It might not be feasible, however, to design a cost function for each individual classification problem.
Thus, we derive a set of cases where the concept of outperformance is independent from a specific cost function (under the assumption that the cost of rejection is never greater than the cost of misclassification).

With the properties and the concept of outperformance in place, we present three measures that satisfy the above properties:
\begin{itemize}
\item \textbf{\emph{Nonrejected accuracy}} measures the ability of the classifier to accurately classify nonrejected samples;
\item \textbf{\emph{Classification quality}} measures the ability of the classifier with rejection to accurately classify nonrejected samples and to reject misclassified samples;
\item \textbf{\emph{Rejection quality}} measures the ability of the classifier with rejection to make errors on rejected samples only.
\end{itemize}

With the three measures in place, we can explore the best and worst case scenarios for each measure, for a given reference classifier with rejection.
We denote the proximity of a classifier with rejection to its best and worst case scenarios, with regard to a reference classifier with rejection, as \emph{relative optimality}.
This allows us to easily connect performance measures to problem specific cost functions.
For a classifier with rejection that rejects at two different numbers of rejected samples, the relative optimality defines the family of cost functions on which rejection at one number rejected samples is better, equal, or worse than rejection at the other number of rejected samples.

The rest of the paper is structured as follows.
In Section~\ref{sec:csr}, we present the classifier with rejection; we introduce the necessary notation in Section~\ref{subsec:notation}, define the three concepts of rejector outperformance that do not depend on cost functions in Section~\ref{subsec:comparative}, and present the desired performance measure properties in Section~\ref{subsec:properties}.
In Section~\ref{sec:measures}, we present the set of proposed performance measures.
In Section~\ref{sec:relative_optimality}, we connect the performance measures to cost functions by defining relative optimality.
In Section~\ref{sec:experimental}, we illustrate performance measures on real-world applications.
In Section~\ref{sec:conclusion}, we conclude the paper.

\section{Classifiers with rejection}
\label{sec:csr}
\subsection{Notation}
\label{subsec:notation}
A classifier with rejection can be seen as a coupling of a classifier  $C$ with a rejection system $R$.
The classification maps $n$ $d$-dimensional feature vectors $\x$ into $n$ labels $C(\x): \mathcal{R}^{d\times n} \rightarrow \{1, \hdots, K \} ^{n}$, such that 
\begin{equation*}
\hat{\y} = C(\x),
\end{equation*}
where $\hat{\y}$ denotes a labeling.
The rejector $R$ maps the classification (feature vectors and associated labels) into a binary rejection vector, $ R(\x,\hat{\y}):   \mathcal{R}^{d\times n} \times \{1, \hdots, K \} ^{n} \rightarrow  \{0, 1 \} ^{n}$, such that 
\begin{equation*}
\rr =R(\x,\hat{\y}),
\end{equation*} where $\rr$ denotes the binary rejection vector.
We define a classification with rejection $\hat{\y}^R$ as 
\begin{equation*}
\hat{\y}^R_i = \begin{cases}
\hat{\y}_i, \mbox{ if } \rr_i = 0, \\
0, \mbox{ if } \rr_i = 1,
\end{cases}
\end{equation*}
where $\rr_i$ corresponds to the binary decision to reject ($r_i = 1$) or not ($r_i = 0$) the $i$th classification, and $\hat{\y}^R_i=0$ denotes rejection.

By comparing the classification $\hat{\y}$ with its ground truth $\y$, we form a binary $n$-dimensional accuracy vector $\avector$, such that $\avector_i$ measures whether the $i$th sample is classified accurately.
The binary vector $\avector$ imposes a partition of the set of samples in two subsets $\Ap$ and $\M$, namely the subset of accurately classified samples and the subset of misclassified samples.
Let $\cc$ be a confidence vector associated with the classification $\hat{y}$, such that 
\begin{equation*}
\cc_i \geq \cc_j \implies \rr_i \leq \rr_j,
\end{equation*} this is, if sample $i$ is rejected, then all the samples $j$ with smaller confidence $\cc_i  <  \cc_j$ are also rejected.
We thus have the ground truth $\y$, the result of the classification $\hat{\y}$, and the result of the classification with rejection $\hat{\y}^R$.

Let $\hat{\cc}$ denote the reordering of the confidence vector $\cc$ in decreasing order.
If we keep $k$ samples with the highest confidence and reject the rest $n-k$ samples, we obtain two subsets:  $k$ nonrejected samples and $n-k$ rejected samples, $\mathcal{N}$ and $\mathcal{R}$\footnote{We note that $R$ corresponds to a rejector, a function that maps classification into a binary rejection vector, whereas $\mathcal{R}$ denotes a set of samples that are rejected.} respectively.
Our goal is to separate the accuracy vector $\avector$ into two subvectors
($\avector_{\bm \N}$ and $\avector_{\bm \R}$), based on the confidence vector $\cc$ such
that all misclassifications are in the $\avector_{\bm \R}$ subvector,
and all accurate classifications are in the $\avector_{\bm \N}$ subvector.
We should note that, since ${\bm \N}$ and ${\bm \R}$ have disjoint supports,
\begin{align}
\label{eq:fundamental_identity}
\|\avector\| = \|\avector_{\bm \N}\| + \|\avector_{\bm \R}\|,
\end{align}
for all ${\bm \N}, {\bm \R}$ such that ${\bm \N} \cap {\bm \R} = \emptyset$ and ${\bm \N } \cup {\bm \R} = \{ 1, \hdots, N\}$.
As we only work with the norm of binary vectors, we point that $\|\avector\|_0 = \|\avector\|_1$; for simplicity, we omit the subscript.

\begin{figure}[htp]
\begin{center}
\begin{tabular}{ccc}
\includegraphics[width=.125\columnwidth]{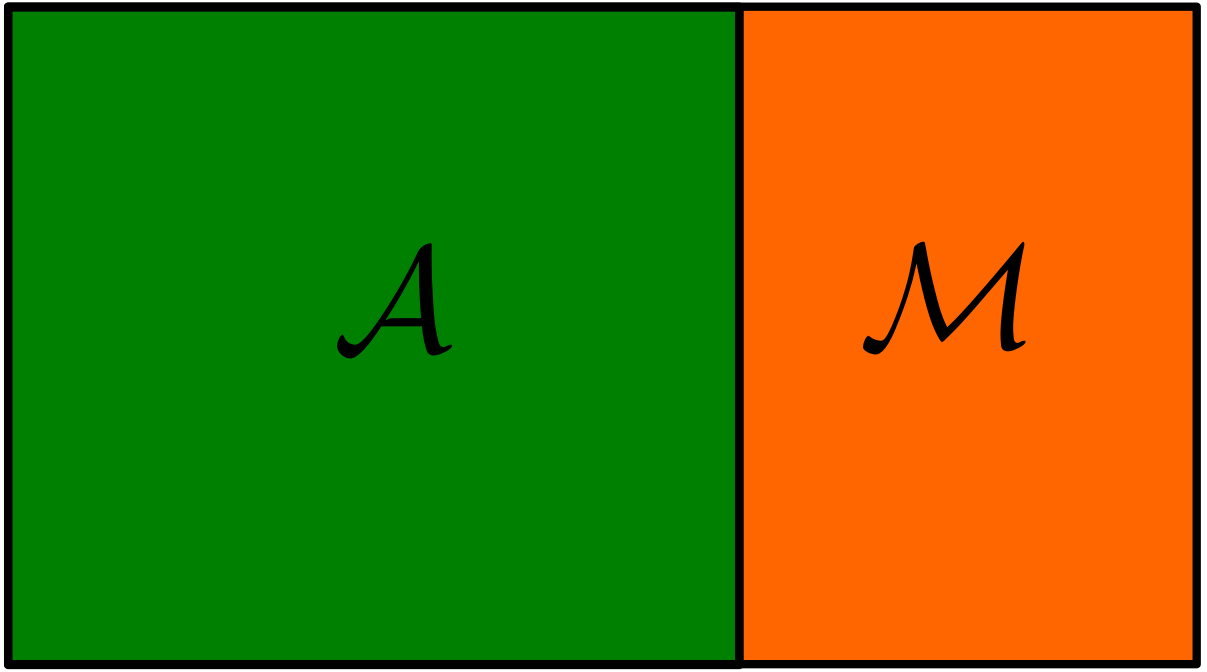}&
\includegraphics[width=.125\columnwidth]{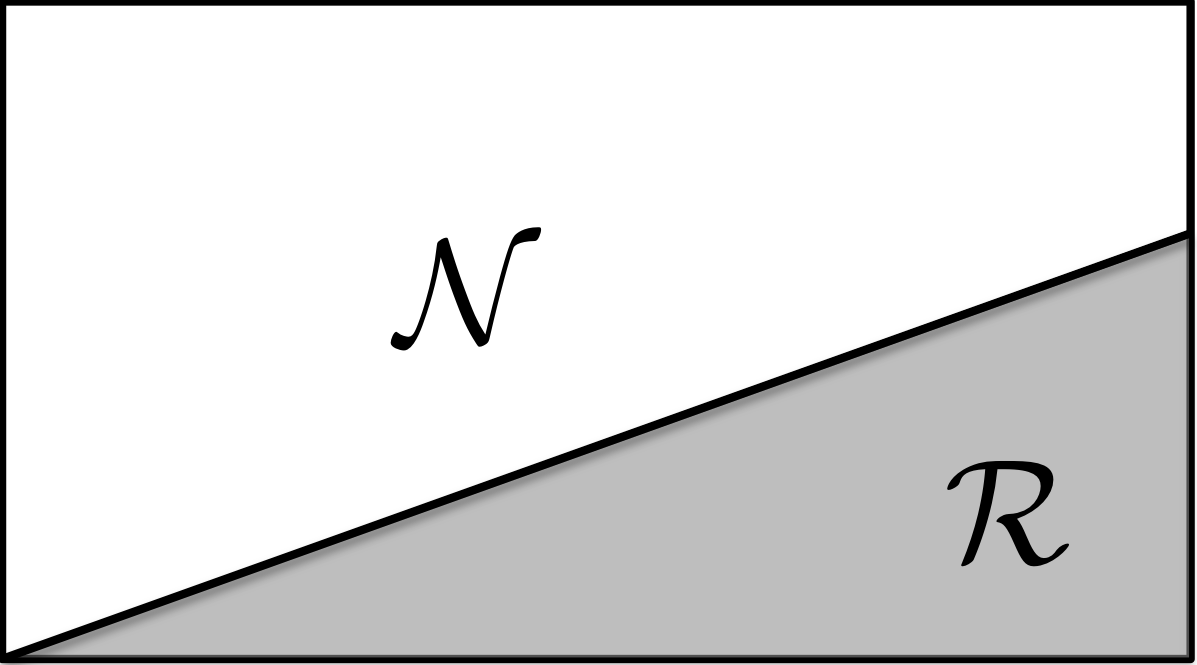}&
\includegraphics[width=.125\columnwidth]{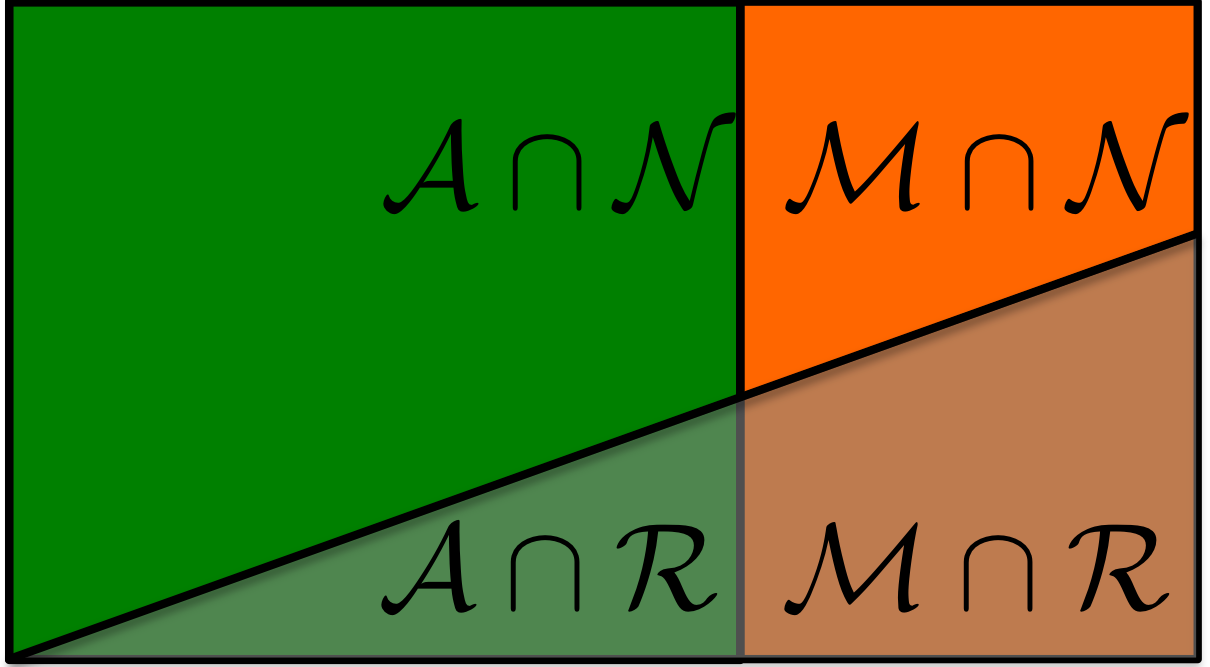}\\
\footnotesize ({a}) & 
\footnotesize ({b})& 
\footnotesize ({c}) \\
\end{tabular}
\caption{\label{fig:rej_example}
Partition of the sample space based on the performance of the (a) classification only (partition space $\Ap$ and $\M$); (b) rejection only (partition space $\R$ and $\N$); and (c) classification with rejection.
Green corresponds to accurately classified samples and orange to misclassified samples.
Gray corresponds to rejected samples and white to nonrejected samples.}
\end{center}
\end{figure} 
With the partitioning of the sample space into $\mathcal{A}$ and $\mathcal{M}$ according to the values of the binary vector $\avector$, and the partitioning of the sample space into $\bm \N $ and $\bm \R$, we thus partition the sample space as in Fig.~\ref{fig:rej_example}:
\begin{itemize}
\item $\mathcal{A} \cap \N$: samples {\bf a}ccurately classified and {\bf n}ot rejected; the number of such samples is $| \mathcal{A} \cap \N | =   \| \avector_\N\|$
\item $\mathcal{M} \cap \N$: samples {\bf m}isclassified and {\bf n}ot rejected; the number of such samples is $| \mathcal{M} \cap \N | =   \|{\mathbf 1} - \avector_\N\|$
\item $\mathcal{A} \cap \R$: samples {\bf a}ccurately classified and {\bf r}ejected; the number of such samples is $| \mathcal{A} \cap \R | =   \| \avector_\R\|$
\item $\mathcal{M} \cap \R$: samples {\bf m}isclassified and {\bf r}ejected; the number of such samples is $| \mathcal{M} \cap \R | =   \|{\mathbf{1}} - \avector_\R\|$
\end{itemize}

\subsection{Comparing classifiers with Rejection}
\label{subsec:comparative}
The comparison of the performance of two rejectors is nontrivial.
It depends on the existence of a problem specific cost function that takes in account the trade-off between misclassification and rejection.
If a cost function exists, the performance is linked to the comparison of the cost function evaluated on each rejector.
However, as previously stated, the design of a problem specific cost function might not be feasible.
Let $\rho$ denote the trade-off between rejection and misclassification, thus defining a family of cost functions where a misclassification has a unitary cost, a rejection has a cost of $\rho$ and an accurate classification has no cost.
Then, there are three general cases where it is possible to perform comparisons between the performance of two rejectors independently of $\rho$:
when the number of rejected samples is the same;
when the number of accurately classified samples not rejected is the same;
and when the number of misclassified samples not rejected is the same.
This is true for all $\rho$, if we assume that $0 \leq \rho \leq 1$, which is a reasonable assumption as $\rho < 0$ would lead to a rejection only problem (all samples rejected), and $\rho >1 $ would lead to a classification only problem (no samples are rejected).
Let $C$ denote a classifier with an accuracy vector $\avector$, and $R_1$ and $R_2$ denote two different rejection mechanisms that partition the sample space in $\mathcal{N}_{R_1},\mathcal{R}_{R_1}$ and $\mathcal{N}_{R_2},\mathcal{R}_{R_2}$ respectively.
\subsubsection*{Equal number of rejected samples}
If both rejectors reject the same number of samples, and if rejector $R_1$ has a larger number of accurately classified samples than $R_2$, then $R_1$ outperforms $R_2$.
\begin{figure}[hptb]
\begin{center}
\begin{tabular}{ccc}
\includegraphics[width=.125\columnwidth]{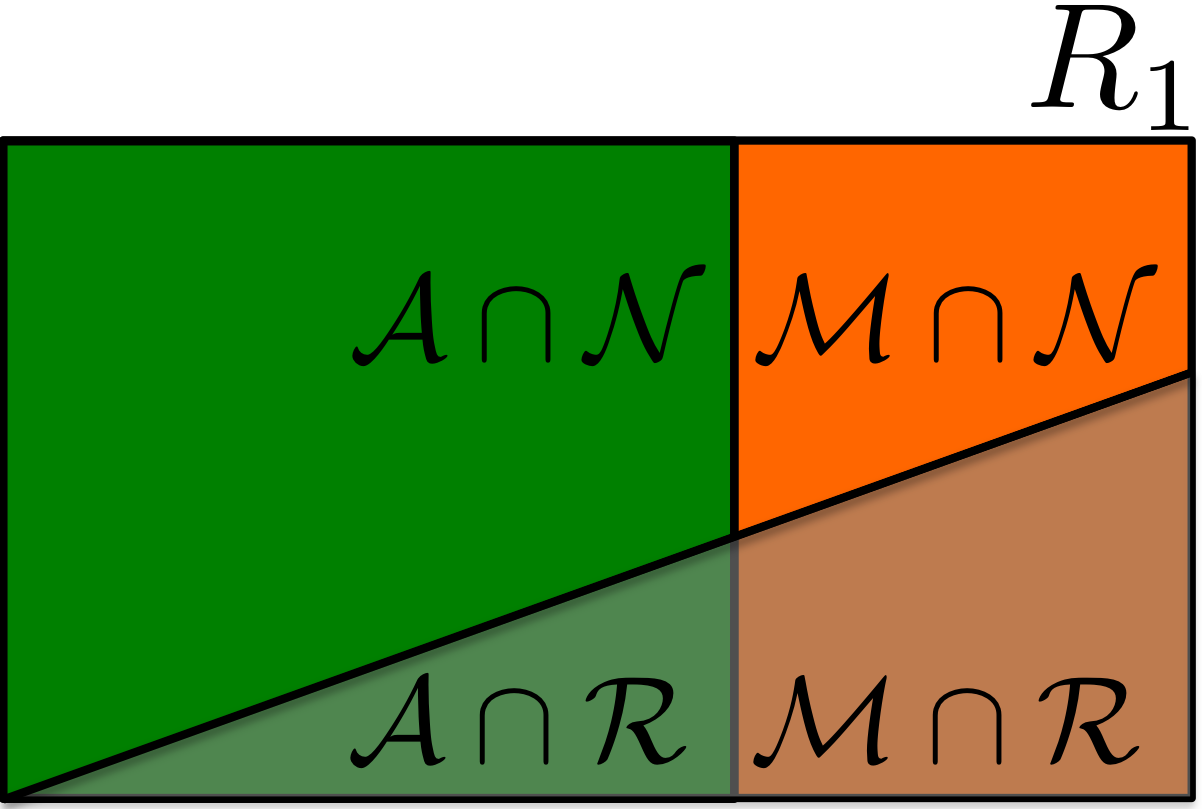}&
\raisebox{.5cm}{$\xrightarrow{\textrm{outperforms}}$}&\includegraphics[width=.125\columnwidth]{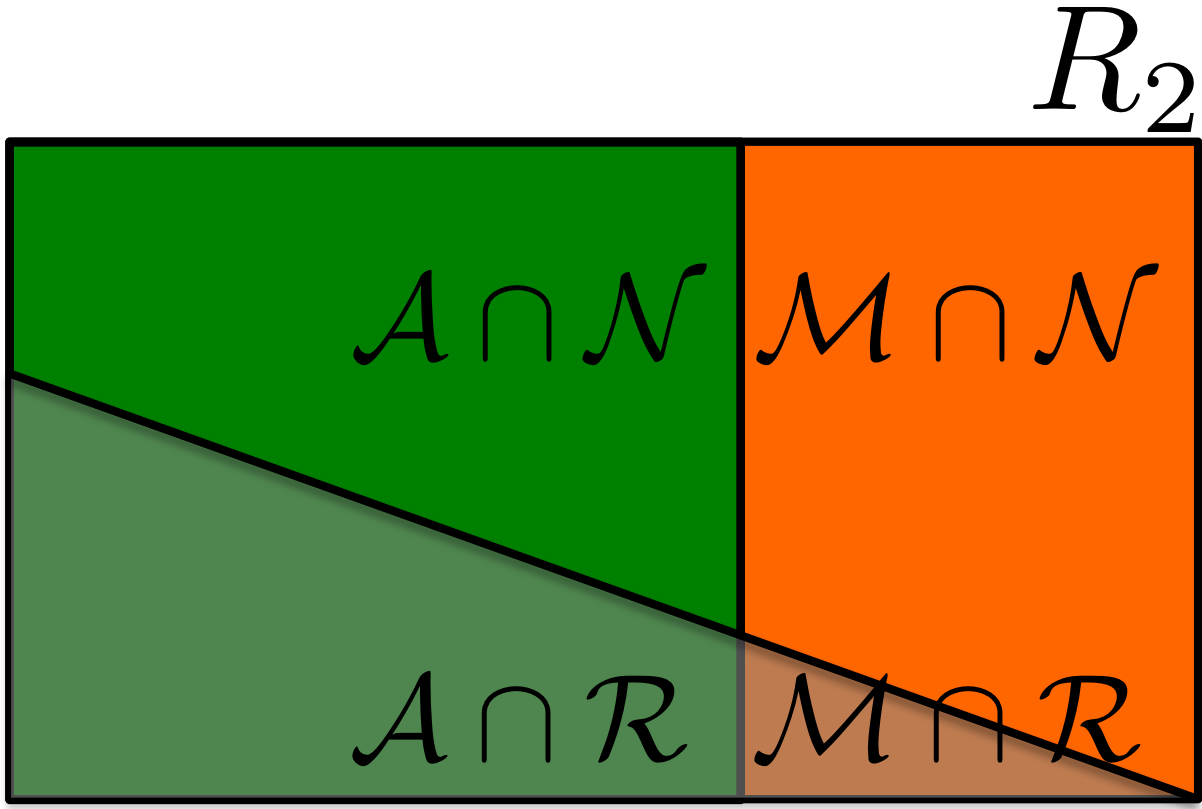}\\
\end{tabular}
\vspace{-0.5cm}
\end{center}
\end{figure}


\subsubsection*{Equal number of nonrejected accurately classified samples}
If both rejectors have the same number of accurately classified samples not rejected, and if rejector $R_1$ rejects more samples than $R_2$, then $R_1$ outperforms $R_2$.
\begin{figure}[hptb]
\begin{center}
\begin{tabular}{ccc}
\includegraphics[width=.125\columnwidth]{example/r1_case1.png}&
\raisebox{.5cm}{$\xrightarrow{\textrm{outperforms}}$}&
\includegraphics[width=.125\columnwidth]{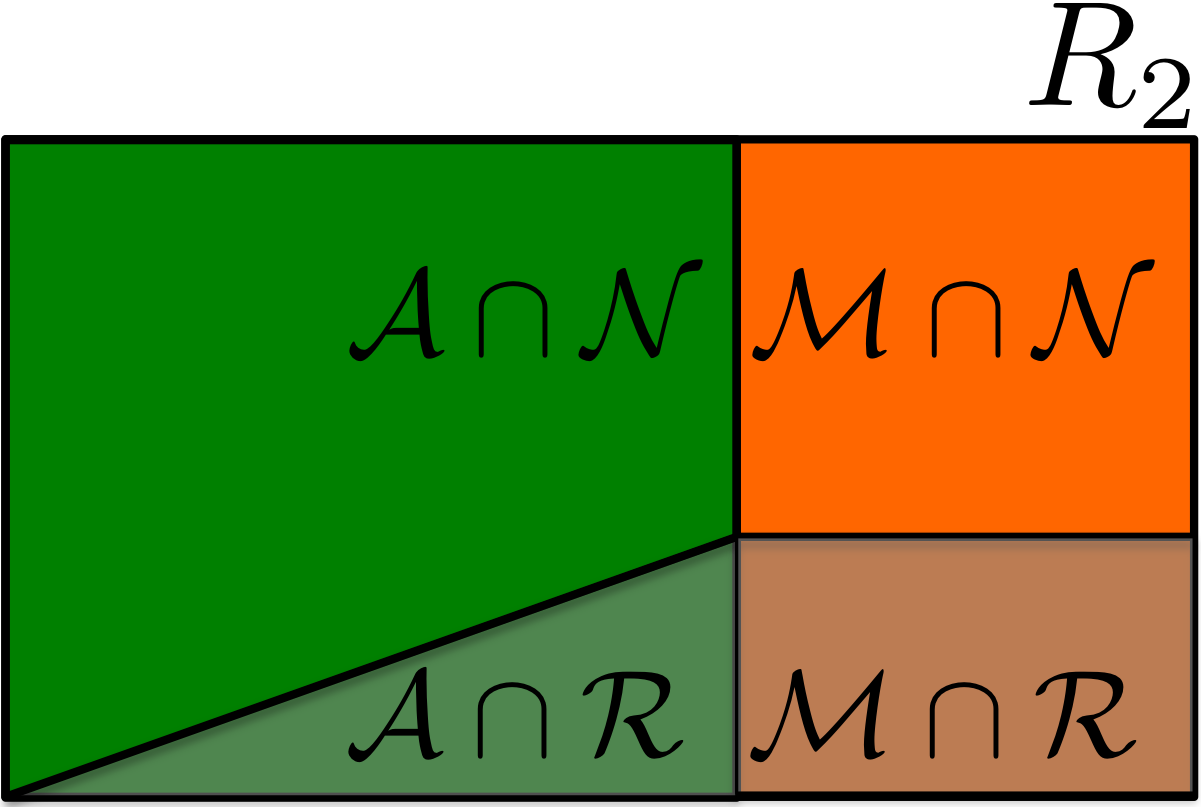}\\
\end{tabular}
\vspace{-0.5cm}
\end{center}
\end{figure} 


\subsubsection*{Equal number of nonrejected misclassified samples}
If both rejectors have the same number of misclassified samples not rejected, and if rejector $R_1$ rejects fewer samples than $R_2$, then $R_1$ outperforms $R_1$.
\begin{figure}[hptb]
\begin{center}
\begin{tabular}{ccc}
\includegraphics[width=.125\columnwidth]{example/r1_case1.png}&\raisebox{.5cm}{$\xrightarrow{\textrm{outperforms}}$}&
\includegraphics[width=.125\columnwidth]{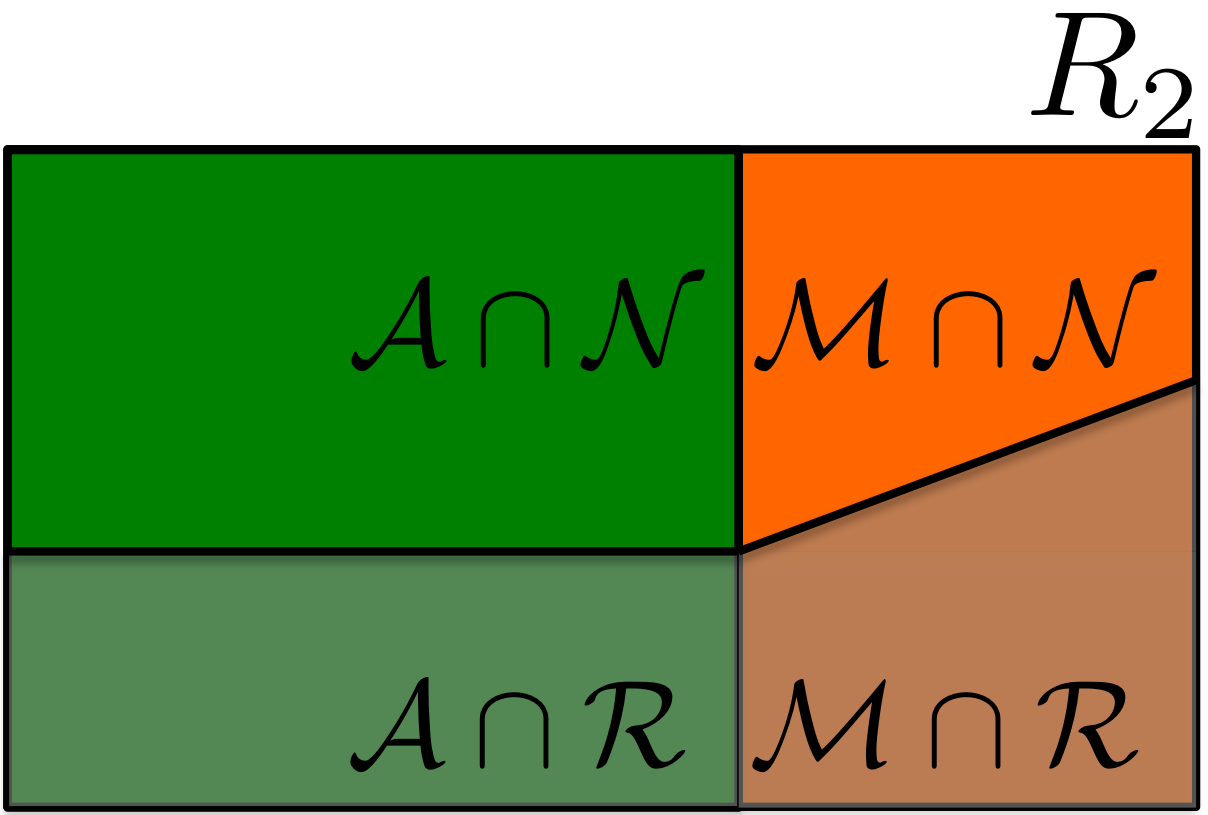}\\
\end{tabular}
\vspace{-0.5cm}
\end{center}
\end{figure} 


\subsection{Desired properties of performance measures}
\label{subsec:properties}
The definition of the rejection problem as the partition of the accuracy vector $\avector$ based on two disjoint supports $\N$ and $\R$ is general and allows us to define desired characteristics for any generic performance measure $\alpha$ that evaluates the performance of classification with rejection.

\pg{Rejected fraction}
We start by introducing the rejected fraction $r$, as the ratio of rejected samples versus the overall number of samples,
\begin{center}
\begin{equation}
\label{eq:rejrat}
r =  \frac{ n - k}{n} =  \frac{|\bm{ \R} |}{|\bm{ \R} | + |\bm {\N} |}  = \frac{\rej}{\all}.
\end{equation}
\end{center}
\subsubsection{Property I: Performance measure is a function of the rejected fraction}
\label{prop1}
The first desired characteristic of a performance measure $\alpha$, is for the measure $\alpha$ to be a function of number of rejected samples,
\begin{equation}
\label{eq:cond1}
\alpha = \alpha (r).
\end{equation}

\subsubsection{Property II: Performance measure is able to compare different rejector mechanisms working at the same rejected fraction}
\label{prop2}
For the same classification $C$, and for two different rejection mechanisms $R_1$ and $R_2$, the performance measures $\alpha(C,R_1 ,r)$ for $R_1$ and $\alpha(C,R_2,r)$ for $R_2$ should be able to compare the rejection mechanisms $R_1$ and $R_2$ when rejecting the same fraction:
\begin{align}
\label{eq:cond2}
\overbrace{\alpha (C, R_1, r)}^{\textrm{rejection } R_1} > \overbrace{\alpha (C, R_2, r)}^{\textrm{rejection } R_2} \iff  R_1 \textrm{ outperforms } R_2 .
\end{align}

\subsubsection{Property III: Performance measure is able to compare different rejector mechanisms working at different rejected fractions}
\label{prop3}
On the other hand, it is also desired that the performance measure be able to compare the performance of different rejection mechanisms $R_1$ and $R_2$ when they reject different fractions $r_1$ and $r_2$,
\begin{equation}
\label{eq:cond3}
R_1 \textrm{ outperforms } R_2 \implies \overbrace{\alpha (C, R_1, r_1)}^{\textrm{rejection } R_1} > \overbrace{\alpha (C, R_2, r_2)}^{\textrm{rejection } R_2}. 
\end{equation}

\subsubsection{Property IV: Maximum and minimum values for performance measures}
\label{prop4}
Any performance measure should achieve its maximum when $\bm \N$ coincides with $\mathcal{A}$ and $\bm \R$ with $\mathcal{M}$, corresponding to simultaneously rejecting all misclassified samples and not rejecting any accurately classified samples ($\avector_{\bm \N} = 0$ and $\avector_{\bm\R} = 1$ are empty).
Similarly, the performance measure should achieve its minimum when $\bm \N$ coincides with $\mathcal{M}$ and $\bm \R$ with $\mathcal{A}$, corresponding to rejecting all accurately classified samples and not rejecting any misclassified samples ($\avector_{\bm \N} = 1$ and $\avector_{\bm\R} = 0$ are empty).

\section{Performance measures}
\label{sec:measures}
We are now ready to define the three performance measures.
First, we will show that the nonrejected accuracy, as used extensively in the literature, is a performance measure that satisfies all our properties.
We will then present two other measures that also satisfy the same properties: classification quality and rejection quality.

\subsection{Nonrejected accuracy $\A$}
The nonrejected accuracy measures the accuracy on the subset of nonrejected samples
\begin{equation*}
\A = \frac{\|\avector_{\bm \N } \| }{ n - k} = \frac{\|\avector_{\bm \N } \| }{| \bm \N |} = \frac{\nrejc}{\nrej}.
\end{equation*}
The nonrejected accuracy measures the proportion of samples that are accurately classified and not rejected compared to the samples that are not rejected.
In a probabilistic interpretation, it is equivalent to the expected value of the conditional probability of a sample being accurately classified given that it was not rejected.

We can represent the nonrejected accuracy as a function of the rejected fraction,
\begin{equation}
\label{eq:aasr}
\A =  \frac{\|\avector_{\bm \N } \| }{| \bm \N |} = \frac{\|\avector_{\bm \N } \| }{n ( 1 -r )} = \A (r),
\end{equation}
satisfying Property I.
Properties II, III, and IV are also satisfied; the proof is given in the Appendix.

We note that the maximum and minimum values of the nonrejected accuracy, $1$ and $0$ respectively, are nonunique.
Two different rejectors can have a nonrejected accuracy of $1$ if the nonrejected samples are all accurately classified.
For example, if rejector $R_1$ rejects all misclassified samples and does not reject any accurately classified samples, $\mathcal{R} = \mathcal{M}$, and rejector $R_2$ rejects all misclassified samples and some accurately classified samples, $\mathcal{R} \supseteq \mathcal{M}$, both their nonrejected accuracies will be $1$.

\subsection{Classification quality $\Q$}
The classification quality measures the correct decision making of the classifier-rejector, assessing both the performance of the classifier on the set of nonrejected samples and the performance of the rejector on the set of misclassified samples.
This equates to measuring the number of accurately classified samples not rejected $\mathcal{A} \cap \mathcal{N}$ and the number of misclassified samples rejected $\mathcal{M} \cap \mathcal{R}$,
\begin{equation*}
\Q = \frac{\|\avector_{\bm \N } \|  + \|\mathbf{1} - \avector_{\bm \R } \| }{|\bm \N|+|\bm \R|} = \frac{\|\avector_{\bm \N } \|  + \|\mathbf{1} - \avector_{\bm \R } \| }{n} = \frac{\qtop}{\all}.
\end{equation*}
In a probabilistic interpretation, this is equivalent to the expected value of probability of a sample being accurately classified and not rejected or a sample being misclassified and rejected.

To represent the classification quality $Q$ as a function of the fraction of rejected samples $r$, we analyze separately the performance of the classifier on the subset of nonrejected samples and the performance of the rejector on the subset of misclassified samples.
The performance of the classifier on the subset of nonrejected samples is the proportion of accurately classified samples not rejected to the total number of samples, which can be easily represented in terms of the nonrejected accuracy as follows,
\begin{equation}
\label{eq:q_part1}
\frac{\|\avector_{\bm \N } \|}{n}  = \frac{\|\avector_{\bm \N } \|}{n (1-r)}  (1-r)=A(r)  (1-r).
\end{equation}
The performance of the rejector on the subset of misclassified samples is
\begin{align}
\label{eq:q_part2}
 \frac{ \|\mathbf{1} - \avector_{\bm \R } \| }{n}  = \frac{ \|\mathbf{1} - \avector\| - \|\mathbf{1} - \avector_{\bm \N}\|  }{n} = 1-A(0) - \frac{ \|\mathbf{1}- \avector_{\bm \N} \|}{ n}  = \nonumber \\
 1-A(0) -  \frac{ k - \| \avector_{\bm \N} \|}{n}  = 1 -A(0)  - \frac{k}{n} +\frac{\| \avector_{\bm \N} \|}{n}  = \nonumber \\
 1 - A(0) - (1 - r) + A(r) (1-r)  = - A(0) + r  + A(r) (1-r).
\end{align}
By combining \eqref{eq:q_part1} and \eqref{eq:q_part2}, we can represent the classification quality as 
\begin{equation}
\label{eq:qasr}
Q(r) = 2 A(r) (1-r) + r - A(0),
\end{equation}
satisfying Property I.
Properties II, III, and IV are also satisfied; the proof is given in the Appendix.

We note that both the maximum and the minimum values of the classification quality, $1$ and $0$ respectively, are unique.
$Q(r) = 1$ describes an ideal rejector that does not reject any of the accurately classified samples and rejects all misclassified samples, $\mathcal{A} = \mathcal{N}$ and $\mathcal{M} = \mathcal{R}$.
Conversely, $Q(r) = 0$ describes the worst rejector that rejects all the accurately classified samples and does not reject any misclassified sample, $\mathcal{A} = \mathcal{R}$ and $\mathcal{M} = \mathcal{N}$.

We can use the classification as in~\eqref{eq:qasr} to compare the proportion of correct decisions between two different rejectors, for different values of rejected fractions.
We note that as $Q(0) = A(0)$, we can compare the proportion of correct decisions by using classification with rejection versus the use of no rejection at all.

\subsection{Rejection quality $\PH$}
Finally, we present the rejection quality to evaluate the ability of the rejector to reject misclassified samples.
This is measured through the ability to concentrate all misclassified samples onto the rejected portion of samples.
The rejection quality is computed by comparing the proportion of misclassified to accurately classified samples on the set of rejected samples with the proportion of misclassified to accurately classified samples on the entire data set,
\begin{equation*}
\phi = \frac{\|\mathbf{1}- \avector_{\bm \R} \|}{\|\avector_{\bm \R} \|} \bigg/ \frac{\|\mathbf{1} - \avector \|}{\|\avector \|} = \frac{\rnc}{\rc} \bigg/ \frac{\nc}{\cor}
\end{equation*}
As the rejection quality is not defined when there are no misclassified rejected samples, $|\avector_{\bm \R}\| = 0$, we define $\phi \equiv \infty $ if any sample is rejected $|\R| >0$, meaning that no accurately classified sample is rejected and some misclassified samples are rejected, and $\phi \equiv 1$ if no sample is rejected $|\R| =0$.
To express the rejection quality as a function of the rejected fraction, we note that, by \eqref{eq:fundamental_identity}, we can represent the accuracy on the rejected fraction as $\|\avector_{\bm \R} = \|\avector\| - \|\avector_{\bm \N } \|$, and $\| 1-\avector \|$ as $n(1 - A(0))$.
This means that 
\begin{equation*}
\phi = \frac{r - A(0) + A(r) (1-r) }{A(0) - A(r) (1-r)}\frac{A(0)}{1 - A(0)} = \phi(r)
\end{equation*}
satisfying Property I.
Properties II, III, and IV are also satisfied; the proof is given in the Appendix.

Unlike the nonrejected accuracy and the classification quality, the rejection quality is unbounded.
A value of $\phi$ greater than one means that the rejector is effectively decreasing the concentration of misclassified samples on the subset of nonrejected samples, thus increasing the nonrejected accuracy.

The minimum value of $\phi$ is $0$, and its maximum is unbounded by construction.
Any rejector that only rejected misclassified samples will achieve a $\phi$ value of $\infty$, regardless of not rejecting some misclassified samples.

\section{Quantifying performance of a classifier with rejection}
\label{sec:relative_optimality}
With the three performance measures defined, we can now compare performance of classifiers with rejection.
We illustrate this in Fig.~\ref{fig:outperformance_nonrej}, where we consider a general classifier with rejection.
In the figure, black circles in the center correspond to a classifier that rejects $20\%$ of the samples, with a nonrejected accuracy of $62.5\%$, a classification quality of $65\%$, and a rejection quality of $3.67$; we call that black circle a \emph{reference operating point}.

\subsection{Reference operating point, operating point and operating set}
A set of performance measures and the associated rejected fraction $r$ correspond to a \emph{reference operating point} of the classifier with rejection. 
Given a reference operating point, we define the \emph{operating set} as the set of achievable operating points as a function of the rejected fraction.
This further means that for each operating point of a classifier with rejection there is an associated operating set.

Any point in the green region of each of the plots in Fig.~\ref{fig:outperformance_nonrej} is an operating point of a classifier with rejection that outperforms the one at the reference operating point (black circle), and any operating point in the orange region is an operating point of a classifier with rejection that is outperformed by the one at the reference operating point (black circle), regardless of the cost function (assuming that the cost of rejection is never greater than the cost of misclassification).
In white regions, performance \emph{depends} on the trade-off between rejection and misclassification, and is thus dependent of the cost function.
The borders of the green and orange regions correspond to the best and worst behaviors, respectively, of classifiers with rejection as compared to the reference operating point.
Thus, given the reference operating point, its correspondent \emph{operating set} is the union of the white regions including the borders.

\begin{figure*}[ht!]
\begin{center}
\begin{tabular}{ccc}
\hspace{-0.2in}\includegraphics[width=.35\columnwidth]{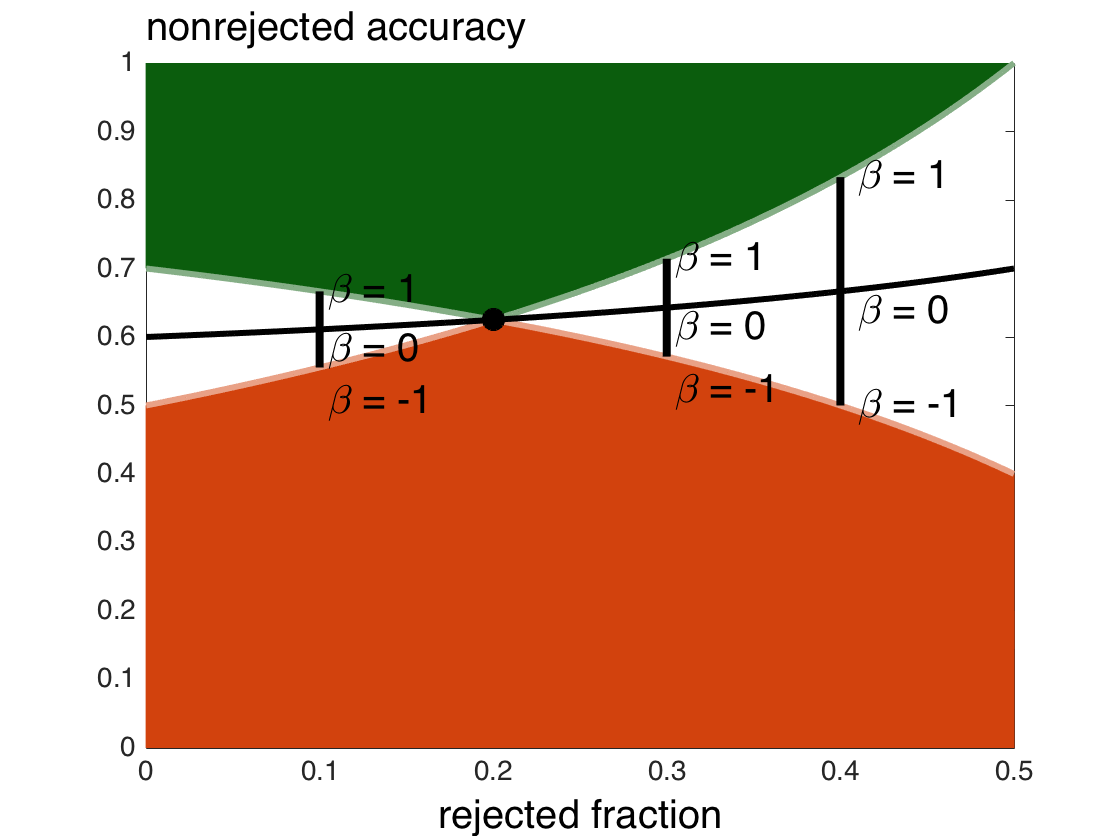}&
\hspace{-0.2in}\includegraphics[width=.35\columnwidth]{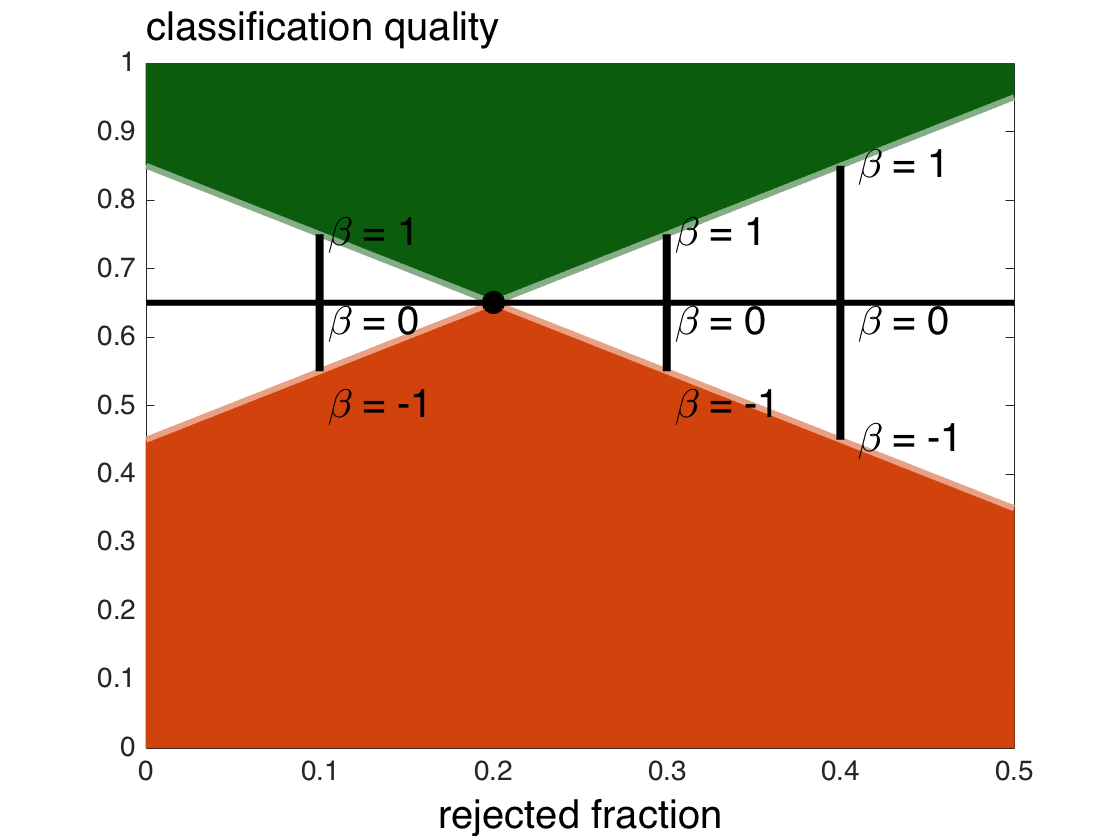} &
\hspace{-0.2in}\includegraphics[width=.35\columnwidth]{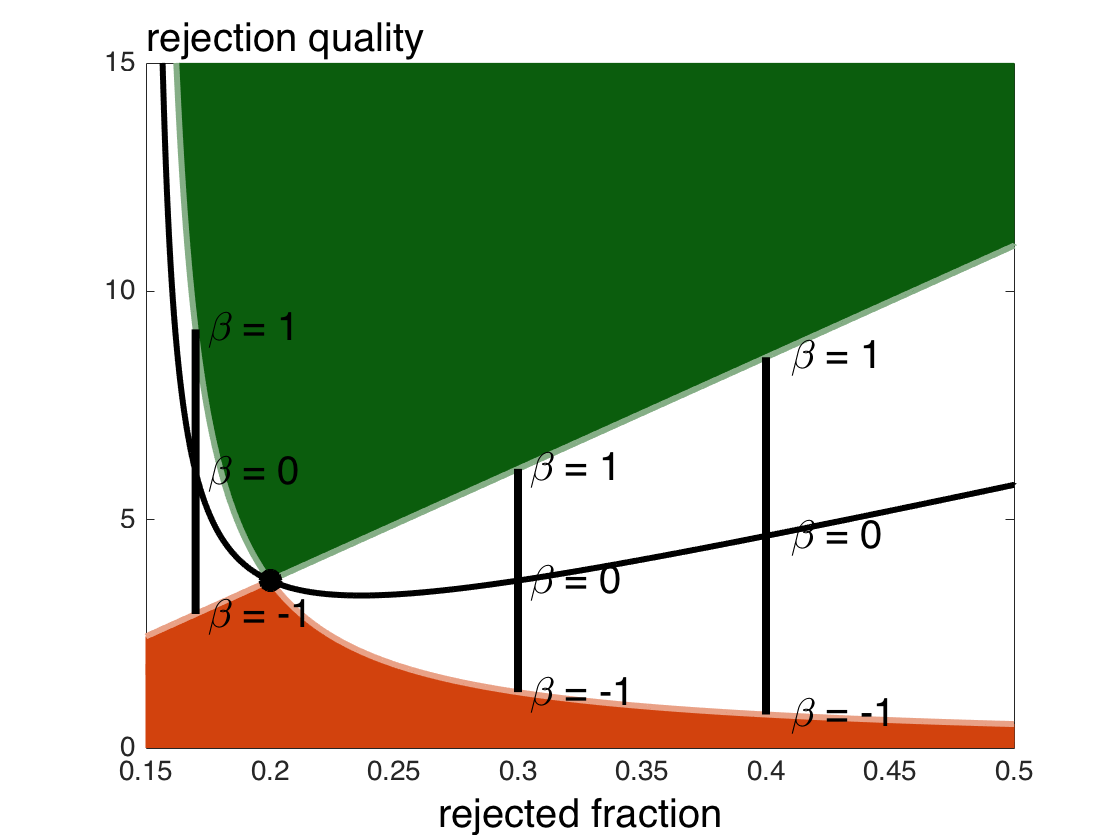}\\
(a) Nonrejected accuracy&
(b) Classification quality&
(c) Rejection quality\\
\end{tabular}
\caption{\label{fig:outperformance_nonrej}
Performance measures with outperformance (green) underperformance (orange) regions for a reference operating point (black circle).
Reference classifier rejects $20\%$ of the samples and achieves a nonrejected accuracy of $62.5\%$, classification quality of $65\%$, and a rejection quality of $3.67$. $\beta $ measures correctness of rejection;
$\beta = 1$ corresponds to the best and
$\beta = -1$ to the worst rejection behaviors, respectively.
}
\end{center}
\end{figure*} 

\subsection{Relative optimality}
To compare the behavior of a classifier with rejection in the white region to that at the reference operating point, we measure how close that classifier is to the green and orange region borders (best/worst behaviors).
Let $\beta = 0$ denote the curve that corresponds to the middle point between the best and worst behaviors (black curve in Fig.~\ref{fig:outperformance_nonrej}), $\beta = 1$ to the best behavior (border with the green region), and $\beta = -1 $ to the worst behavior (border with the orange region).
We call $\beta$ \emph{relative optimality}, as it compares the behavior of a classifier with rejection relative to a given reference operating point.

Let us consider a reference operating point defined by a nonrejected accuracy $A_0$ at a rejected fraction $r_0$; we can now compare the performance at an arbitrary operating point $(A_1,r_1)$ with that at a reference operating point $(A_0,r_0)$ as
\begin{equation}
\label{eq:beta_r1r0}
\beta = 
\begin{cases}
 2 \frac{A_1 (1-r_1) - A_0 (1-r_0)}{r_1 - r_0} + 1,& \textrm{ if } r_1 > r_0 , \\
- 2 \frac{A_1 (1-r_1) - A_0 (1-r_0)}{r_1 - r_0} - 1 ,& \textrm{ if } r_1 < r_0.
\end{cases}
\end{equation}

\subsection{Cost function}
Furthermore, the relative optimality allows us to compare any two operating points of a classifier with rejection taking in account a cost function $L$ which measures the relative cost of rejection versus misclassification.
Let us consider the following generic cost function
\begin{equation}
\label{eq:loss_function}
L_\rho (\hat{y}_i^R) = \begin{cases}
0,& \textrm{ $\hat{y}_i^R$ accurately classified and not rejected;}\\
1,& \textrm{ $\hat{y}_i^R$ misclassified and not rejected;}\\
\rho,& \textrm{ $\hat{y}_i^R$ rejected,}
\end{cases}
\end{equation}
where $\rho$ is the cost of rejection and represents the trade-off between rejection and misclassification.
We now connect the concept of relative optimality with the generic cost function $L$ as follows.
\begin{theo}
For an operating point $(A_1,r_1)$ with a relative optimality of $\beta$ relative to the reference operating point $(A_0,r_0)$, and $r_1 > r_0$, 
\begin{equation}
\label{eq:loss_relative_th}
\sgn(\Delta L_\rho) = \sgn(L_\rho(A_0,r_0) - L_\rho(A_1,r_1)) = \sgn\left(\frac{\beta + 1}{2} - \rho\right),
\end{equation}
where $\Delta L_\rho$ is the difference between the cost function at the reference operating point $L_\rho(A_0,r_0)$ and the cost function at the operating point $L_\rho(A_1,r_1)$.
\end{theo}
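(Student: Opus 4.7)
The plan is to write the cost function $L_\rho$ in closed form in terms of $(A, r)$, take the difference between the two operating points, and then substitute the definition of $\beta$ to recognize the factor $\tfrac{\beta+1}{2} - \rho$.

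First I would observe that, from the partitioning described in Section~\ref{subsec:notation}, the average cost per sample splits into three contributions. The fraction of samples that are rejected is exactly $r$; the fraction of samples that are nonrejected and accurately classified is $A(1-r)$; and the fraction of samples that are nonrejected and misclassified is $(1-A)(1-r)$. Combining these with the per-sample penalties $0$, $1$, $\rho$ in \eqref{eq:loss_function}, I get
\begin{equation*}
L_\rho(A,r) \;=\; (1-A)(1-r) + \rho\, r.
\end{equation*}

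Next I would subtract the two operating points and collect terms that depend on $r_1 - r_0$ and on $A_1(1-r_1) - A_0(1-r_0)$:
\begin{equation*}
\Delta L_\rho \;=\; L_\rho(A_0,r_0) - L_\rho(A_1,r_1) \;=\; (1-\rho)(r_1 - r_0) \,+\, \bigl[A_1(1-r_1) - A_0(1-r_0)\bigr].
\end{equation*}
The definition of relative optimality in \eqref{eq:beta_r1r0} for the case $r_1 > r_0$ rearranges to
\begin{equation*}
A_1(1-r_1) - A_0(1-r_0) \;=\; \tfrac{1}{2}(\beta - 1)(r_1 - r_0),
\end{equation*}
and plugging this in yields
\begin{equation*}
\Delta L_\rho \;=\; (r_1 - r_0)\left[(1-\rho) + \tfrac{\beta-1}{2}\right] \;=\; (r_1 - r_0)\left[\tfrac{\beta+1}{2} - \rho\right].
\end{equation*}
Because the hypothesis $r_1 > r_0$ makes the prefactor strictly positive, taking signs gives exactly \eqref{eq:loss_relative_th}.

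There is no real obstacle here; the only thing to be careful about is the bookkeeping in the second step, in particular solving \eqref{eq:beta_r1r0} for $A_1(1-r_1) - A_0(1-r_0)$ with the correct sign (the $r_1 > r_0$ branch gives $\beta - 1$, not $\beta + 1$), so that the $(1-\rho)$ term combines cleanly with $\tfrac{\beta-1}{2}$ to produce $\tfrac{\beta+1}{2} - \rho$. After that the sign identity is immediate.
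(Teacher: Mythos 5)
Your proposal is correct and follows essentially the same route as the paper: write $L_\rho$ in closed form from the partition counts, subtract, substitute the $r_1>r_0$ branch of \eqref{eq:beta_r1r0} to get $A_1(1-r_1)-A_0(1-r_0)=\tfrac{\beta-1}{2}(r_1-r_0)$, and factor out the positive quantity $r_1-r_0$. The only cosmetic difference is that you work with the per-sample cost while the paper keeps the overall factor of $n$, which does not affect the sign argument.
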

\begin{proof}
Let $r_1 > r_0$; then we have that the cost function at a generic operating point $(A,r)$ is
\begin{equation*}
L_\rho(A,r) = (1-r)(1-A) n + \rho r n,
\end{equation*}
as we have $(1-r)(1-A) n$ misclassified samples, $(1-r) A n$ accurately classified samples, and $r n$ rejected samples,
and thus 
\begin{align}
\Delta L_\rho & = n \left( (1-r_0)(1-A_0) + \rho r_0 - (1-r_1)(1-A_1) - \rho r_1 \right) \nonumber \\
 \label{eq:rho_pre}& = n \left(r_1 - r_0 - (1-r_0) A_0 + (1-r_1) A_1 + \rho (r_0 - r_1) \right).
\end{align}
On the other hand, from \eqref{eq:beta_r1r0}, we have that
\begin{equation}
\label{eq:beta_gtr0}
A_1 ( 1-r_1) - A_0 (1-r_0) = \frac{\beta - 1}{2} (r_1 - r_0).
\end{equation}
By combining \eqref{eq:rho_pre} and \eqref{eq:beta_gtr0}, we have that
\begin{equation}
\Delta L_\rho = n (r_1 - r_0) \left(  \frac{\beta + 1}{2} - \rho \right).
\end{equation}
Because $r_1-r_0$ and $n$ are positive, $\Delta L_\rho$ and $(\beta + 1) / 2 - \rho $ have the same sign.
\end{proof}
The previous discussion allows us to compare a classifier with rejection $R_1$ to the reference operating point $R_0$ as follows.
Let the operating point $(A_1,r_1)$ be at relative optimality $\beta$ with respect to the reference operating point $(A_0,r_0)$.
Then,
\begin{equation}
\label{eq:connect_loss}
\begin{cases}
L_\rho(A_1,r_1) < L_\rho(A_0,r_0),& \textrm{ for } \rho < (\beta + 1)/2; \\
L_\rho(A_1,r_1) \geq L_\rho(A_0,r_0),& \textrm{ for } \rho \geq (\beta + 1)/2. \\
\end{cases}
\end{equation}

\subsection{Performance measures}
\label{subsec:reconstruction}
We can consider the classifier with rejection as two coupled classifiers if we considered the rejector $R$ to be a binary classifier on the output $\hat{y}$ of the classifier $C$, assigning to each sample a rejected or nonrejected label.
Ideally, $R$ should classify as rejected all samples misclassified by $C$ and classify as nonrejected all the samples accurately classified by $C$.

In this binary classification formulation, the classification quality $Q$ becomes the \emph{accuracy} of the binary classifier $R$, the accuracy of the nonrejected samples $A$ becomes the \emph{precision} (positive predictive value) of the binary classifier $R$, and the rejection quality $\phi$ becomes the \emph{positive likelihood ratio} (the ratio between the true positive rate and the false positive rate) of the binary classifier $R$.
The rejected fraction becomes the ratio between the number of samples classified as rejected and the total number of samples.

This formulation allows us to show that the triplet $(A(r), Q(r), r)$ completely specifies the behavior of the rejector by relating the triplet to the confusion matrix associated with the binary classifier $R$.
As we are able to reconstruct the confusion matrix from the triplet, we are thus able to show that the triplet $(A(r), Q(r), r)$ is sufficient to describe the behavior of the rejector.

\begin{theo}
The set of measures $(A(r), Q(r), r)$ completely specifies the behavior of the rejector.
\end{theo}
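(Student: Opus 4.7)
The plan is to interpret the rejector $R$ as a binary classifier whose confusion matrix has four entries corresponding to the four cells of the partition $(\Ap \cap \N,\ \M \cap \N,\ \Ap \cap \R,\ \M \cap \R)$, and to show that each of these four counts can be written as an explicit function of $A(r)$, $Q(r)$, $r$, and $n$. Since the confusion matrix uniquely determines the behavior of a binary classifier on a given labeled sample set, recovering it from the triplet proves the claim.

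First I would record the four cardinalities in the notation of Section~\ref{subsec:notation}: the true negatives $|\Ap \cap \N| = \|\avector_{\bm \N}\|$, the false negatives $|\M \cap \N| = \|\mathbf{1} - \avector_{\bm \N}\|$, the false positives $|\Ap \cap \R| = \|\avector_{\bm \R}\|$, and the true positives $|\M \cap \R| = \|\mathbf{1} - \avector_{\bm \R}\|$. Next I would read off $\|\avector_{\bm \N}\| = n(1-r)A(r)$ directly from the definition of nonrejected accuracy, and combine this with the identity $|\N| = n(1-r)$ to obtain $\|\mathbf{1} - \avector_{\bm \N}\| = n(1-r)(1-A(r))$.

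Then I would use the definition $Q(r) = (\|\avector_{\bm \N}\| + \|\mathbf{1} - \avector_{\bm \R}\|)/n$ to solve for the remaining misclassified-rejected count,
\begin{equation*}
\|\mathbf{1} - \avector_{\bm \R}\| = n\,Q(r) - n(1-r)A(r),
\end{equation*}
and use $|\R| = nr$ to get the last entry
\begin{equation*}
\|\avector_{\bm \R}\| = nr - n\,Q(r) + n(1-r)A(r).
\end{equation*}
With all four cells expressed in closed form in terms of $(A(r), Q(r), r)$ and $n$, the confusion matrix of $R$ is reconstructed, and hence the partition $(\Ap\cap\N, \M\cap\N, \Ap\cap\R, \M\cap\R)$ that describes the rejector's behavior on the data is fully determined by the triplet.

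I do not expect a genuine obstacle here; the argument is essentially bookkeeping via \eqref{eq:fundamental_identity} and the defining formulas for $A$ and $Q$. The only subtlety worth flagging is that ``behavior of the rejector'' must be read as its action at the observed rejection rate, i.e.\ the induced four-cell partition of the sample set, rather than the underlying ranking function $\cc$; the theorem characterizes the former, since the latter contains strictly more information (the ordering within $\N$ and $\R$) than any summary at a fixed $r$ could carry.
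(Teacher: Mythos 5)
Your proposal is correct and follows essentially the same route as the paper: the paper's proof also reconstructs the four-cell confusion matrix of $R$ from the triplet, phrasing the recovery as a full-rank linear system in $(1,\ r,\ Q(r),\ A(r)(1-r))$ rather than writing the four closed-form expressions as you do. Your explicit formulas $\|\avector_{\bm \N}\| = n(1-r)A(r)$, $\|\mathbf{1}-\avector_{\bm \N}\| = n(1-r)(1-A(r))$, $\|\mathbf{1}-\avector_{\bm \R}\| = nQ(r) - n(1-r)A(r)$, and $\|\avector_{\bm \R}\| = nr - nQ(r) + n(1-r)A(r)$ are exactly the inverse of that system, so the two arguments coincide.
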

\begin{proof}
Let us consider the following confusion matrix associated with $R$:
\begin{align}
\left[ \begin{array}{cc} |\mathcal{A}\cap\mathcal{N}| & |\mathcal{M}\cap\mathcal{N}|   \\ |\mathcal{A}\cap\mathcal{R}|   & |\mathcal{M}\cap\mathcal{R}| \end{array} \right], \nonumber
\end{align}
where $n$ denotes the total number of samples, $|\mathcal{A}\cap\mathcal{N}|/n$ the number of samples accurately classified and not rejected, $\frac{ |\mathcal{M}\cap\mathcal{N}|}{n}$ the number of samples misclassified but not rejected, $|\mathcal{A}\cap\mathcal{R}| $ the number of samples accurately classified but rejected, and $|\mathcal{M}\cap\mathcal{R}|$ the number of samples misclassified and rejected.
Given that $n$ binary classifications classified $n$ samples, the confusion matrix associated with $R$ can be uniquely obtained from the following full rank system:
\begin{align}
\left[\begin{array}{c}  |\mathcal{A}\cap\mathcal{N}| \\ |\mathcal{M}\cap\mathcal{N}|   \\ |\mathcal{A}\cap\mathcal{R}|   \\ |\mathcal{M}\cap\mathcal{R}| \\\end{array}\right] =
n \left[\begin{array}{cccc}0 & 0& 0 &1 \\
1 & -1 & 0 & -1 \\
 0 & 0 & 1 & 1 \\
0 & 1 & -1 & 1 \\
\end{array} \right]
\left[\begin{array}{c} 1 \\  r \\ Q(r) \\ A(r) ~ (1-r) \end{array}\right] .
\nonumber
\end{align}
Therefore, as the set of measures and the confusion matrix are related by a full-rank system, the set of measures $(A(r),Q(r), r)$ completely specifies describes the behavior of the rejector.
\end{proof}

\subsection{Comparing performance of classifiers with rejection}
Given a classifier $C$ and two rejectors $R_1$ and $R_0$, with $r_1> r_0$, and a cost function with a rejection-misclassification trade-off $\rho$, we can now compare the performance of classifiers with rejection.

Rejector $R_1$ outperforms $R_0$ when the following equivalent conditions  are satisfied:
\begin{align}
A_{R_1}(r_1)  > A_{R_0}(r_0) \frac{1-r_0}{1-r_1} + (\rho - 1) \frac{r_1-r_0}{1-r_1} \iff 
Q_{R_1}(r_1)  >  Q_{R_0}(r_0)  + (2 \rho - 1 )(r_1 -r_0). \nonumber
\end{align}

Rejector $R_0$ outperforms $R_1$ when the following equivalent conditions are satisfied:
\begin{align} A_{R_1}(r_1)  < A_{R_0}(r_0) \frac{1-r_0}{1-r_1} + (\rho - 1) \frac{r_1-r_0}{1-r_1} \iff
Q_{R_1}(r_1)  <  Q_{R_0}(r_0)  + (2 \rho - 1 )(r_1 -r_0). \nonumber
\end{align}

Rejectors $R_0$ and $R_1$ are equivalent in terms of performance when the following equivalent conditions are  satisfied:
\begin{align}
A_{R_1}(r_1)  = A_{R_0}(r_0) \frac{1-r_0}{1-r_1} + (\rho - 1) \frac{r_1-r_0}{1-r_1} \iff
Q_{R_1}(r_1)  =  Q_{R_0}(r_0)  + (2 \rho - 1 )(r_1 -r_0) . \nonumber
\end{align}
The proof is given in the Appendix.

\section{Experimental results}
\label{sec:experimental}
To illustrate the use of the proposed performance measures, we apply them to the analysis of the performance of classifiers with rejection applied to synthetic and real data.
We use a simple synthetic problem to motivate the problem of classification with rejection and to serve as a toy example.

We then focus on the application of classification with rejection to pixelwise hyperspectral image classification~\cite{CondessaBK:15a,CondessaBK:15c,CondessaBK:15d}, which is prone to the effects of small and  nonrepresentative training sets, meaning that the classifiers might not be equipped to deal with all existing classes, due to the potential presence of unknown classes.
Classification with rejection is an interesting avenue for hyperspectral image classification as the need to accurately classify the samples is greater than the need to classify all samples.

\subsection{Synthetic data}
As a toy example, we consider a classification problem consisting of four two-dimensional Gaussians with the identity matrix as a covariance matrix and centers at $(\pm 1, \pm 1)$.
The Gaussians overlap significantly, as shown in Fig.\ref{fig:synthetic_data}(a).
This results in a simple classification decision: for each sample, assign the label of the class with the closest center as in Fig.\ref{fig:synthetic_data}(b).

We illustrate our performance measures by comparing two simple rejection mechanisms: (1) \emph{maximum probability rejector}, which, given a classifier and a rejected fraction, rejects the fraction of samples with lower probability; and (2) \emph{breaking ties rejector}, which, given a classifier and a rejected fraction, rejects the fraction of samples with lower difference between the highest and second-highest class probabilities.

\begin{figure}[h!]
\begin{center}
\begin{tabular}{cc}
\hspace{-0.2in}\includegraphics[width=.5\columnwidth]{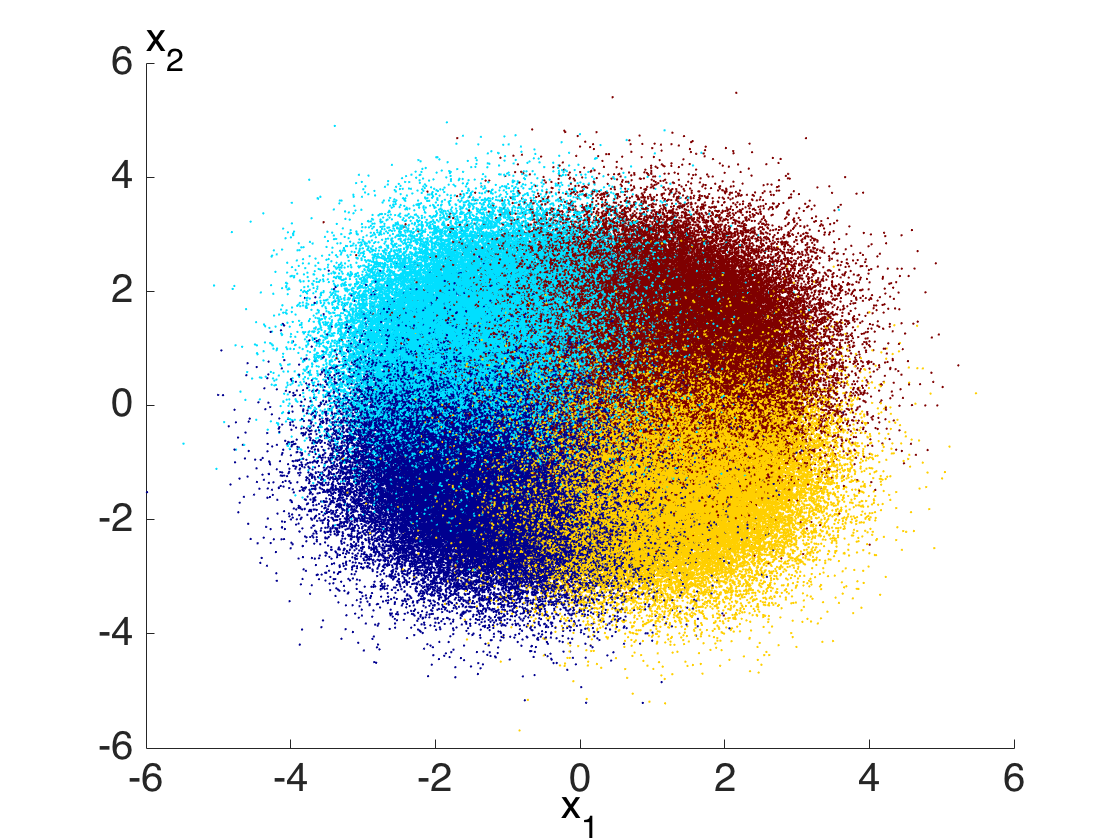} &
\hspace{-0.2in}\includegraphics[width=.5\columnwidth]{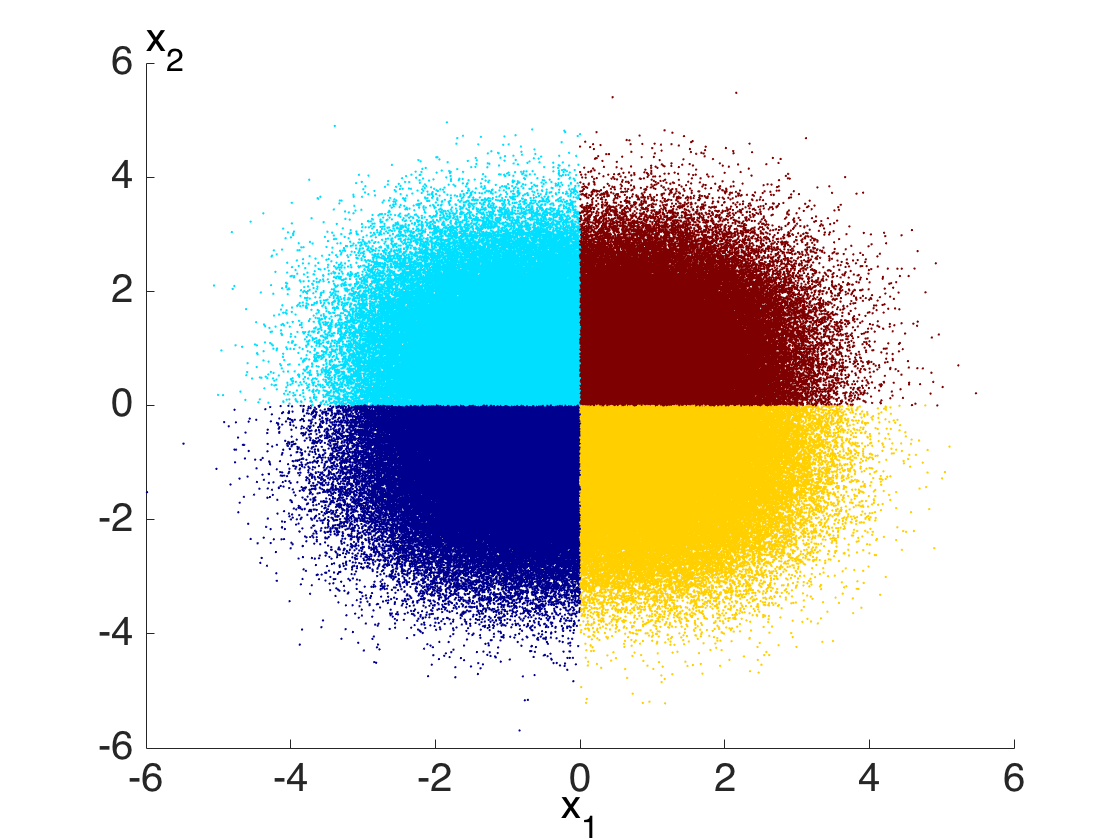} \\
\fns (a) Ground truth & \fns(b) Classification no rejection \\
\hspace{-0.2in}\includegraphics[width=.5\columnwidth]{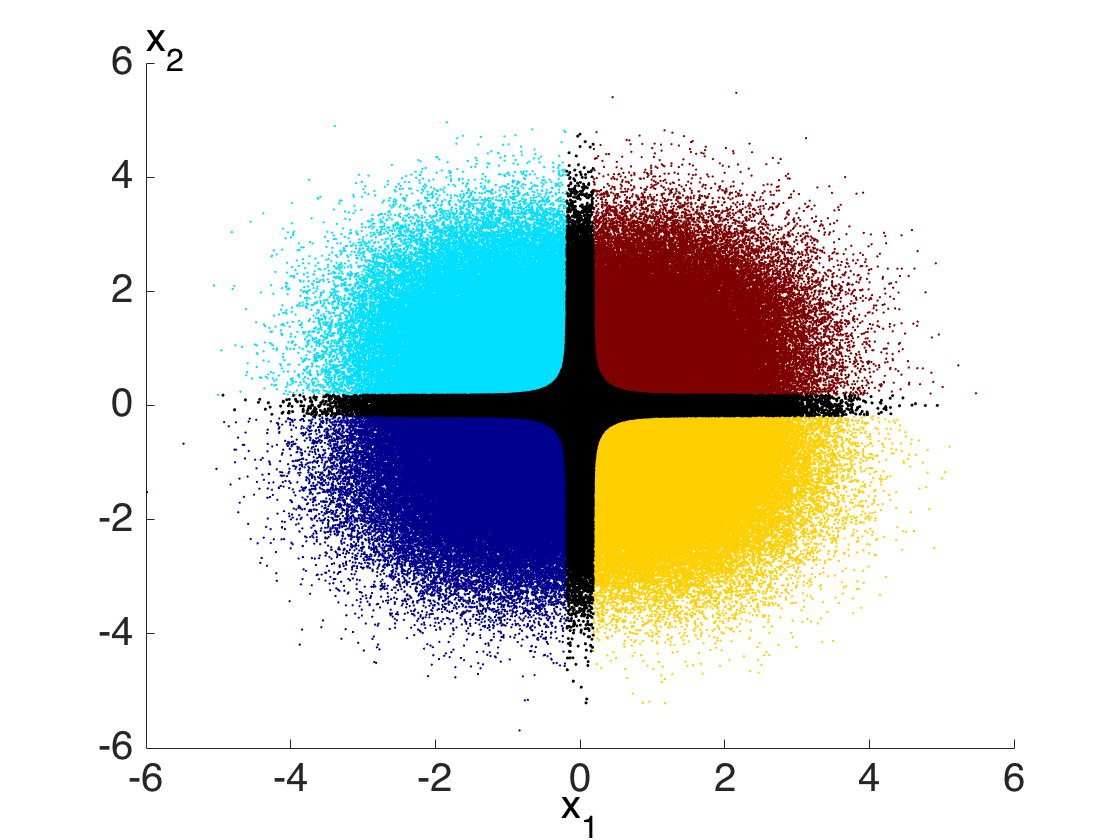} &
\hspace{-0.2in}\includegraphics[width=.5\columnwidth]{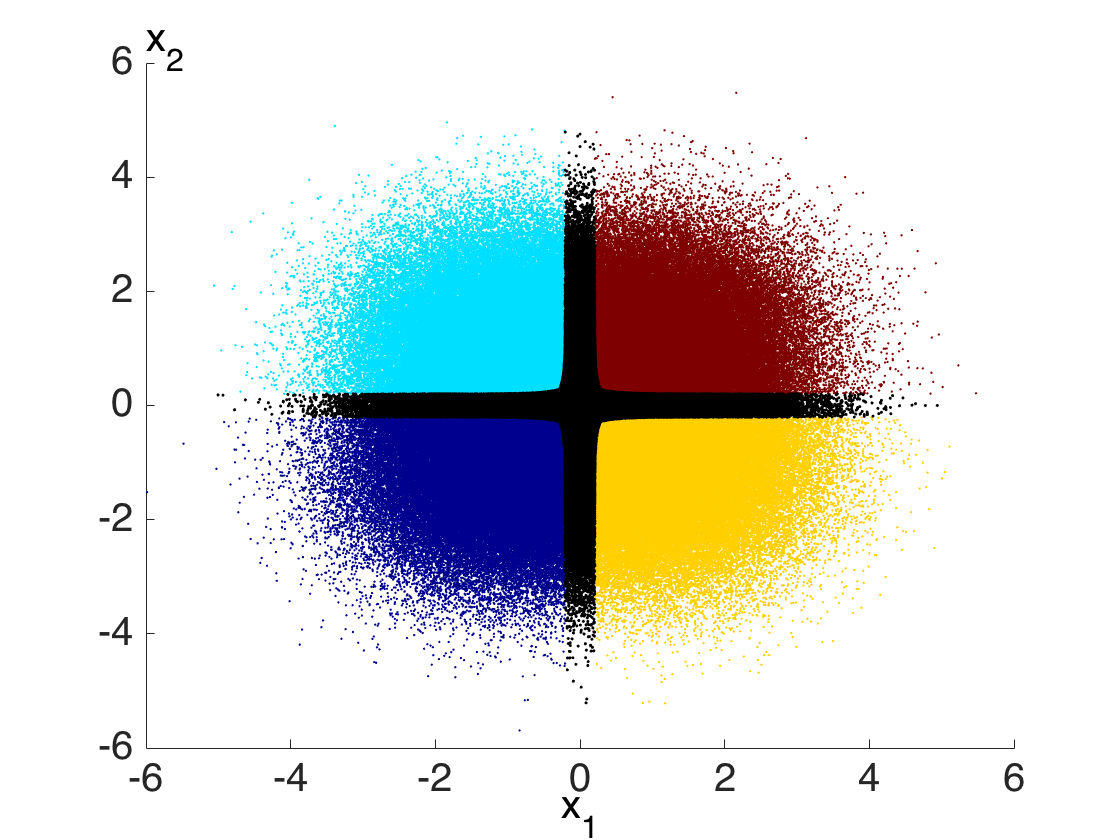} \\
\fns(c) Classification $20\%$ rejection & \fns (d) Classification $20\%$ rejection \\
\fns maximum probability & \fns breaking ties\\
\end{tabular}
\caption{\label{fig:synthetic_data}
Synthetic data example. Four Gaussians with equal covariance (identity covariance matrix) and significant overlap (centered at $(\pm 1, \pm 1)$), classified with rejection (in black).
(a) Ground truth, (b) classification with no rejection, (c) classification with $20\%$ rejection using maximum probability rejector, and (d) classification with $20\%$ rejection using breaking ties rejector.
The differences between the two rejectors are clear near the origin.
Note that the points are not uniformly distributed.
}
\end{center}
\end{figure} 

In Fig.\ref{fig:synthetic_data_measures}, we can see the performance measures computed for all possible rejected fractions for each of the two rejectors.
It is clear that the with the accuracy-rejection curves alone, as shown in Fig.\ref{fig:synthetic_data_measures}(a), we are not able to single out any operating point of the classifier with rejection.
On the other hand, with the classification quality in Fig.\ref{fig:synthetic_data_measures}(b), we can identify where the rejector is maximizing the number of correct decisions, and for which cases having a reject option outperforms not having rejection.
As illustrated in Fig.\ref{fig:synthetic_data_measures}(c), the rejection quality provides an easy way to discriminate between two different rejectors, as it focuses on the analysis of the ratios of correctly classified to incorrectly classified samples on the set of rejected samples.
\begin{figure*}[htpb]
\begin{center}
\begin{tabular}{ccc}
\hspace{-0.2in} \includegraphics[width=.35\columnwidth]{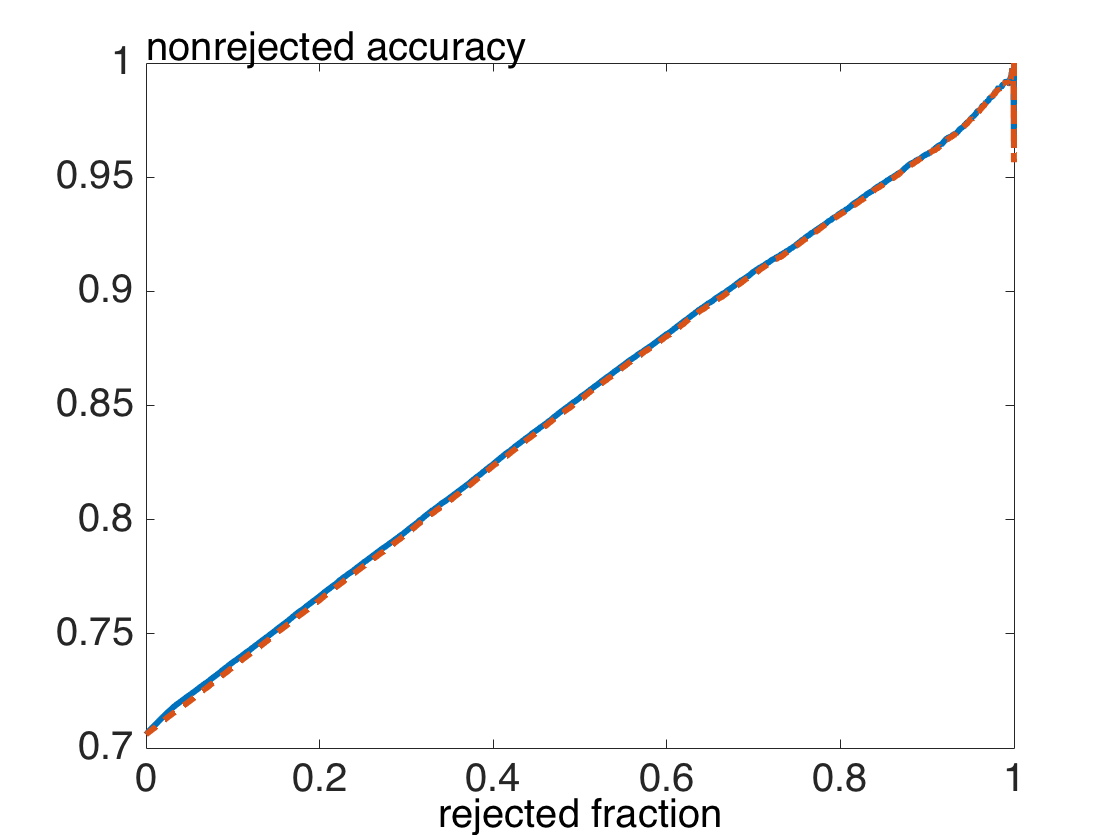} &
\hspace{-0.2in} \includegraphics[width=.35\columnwidth]{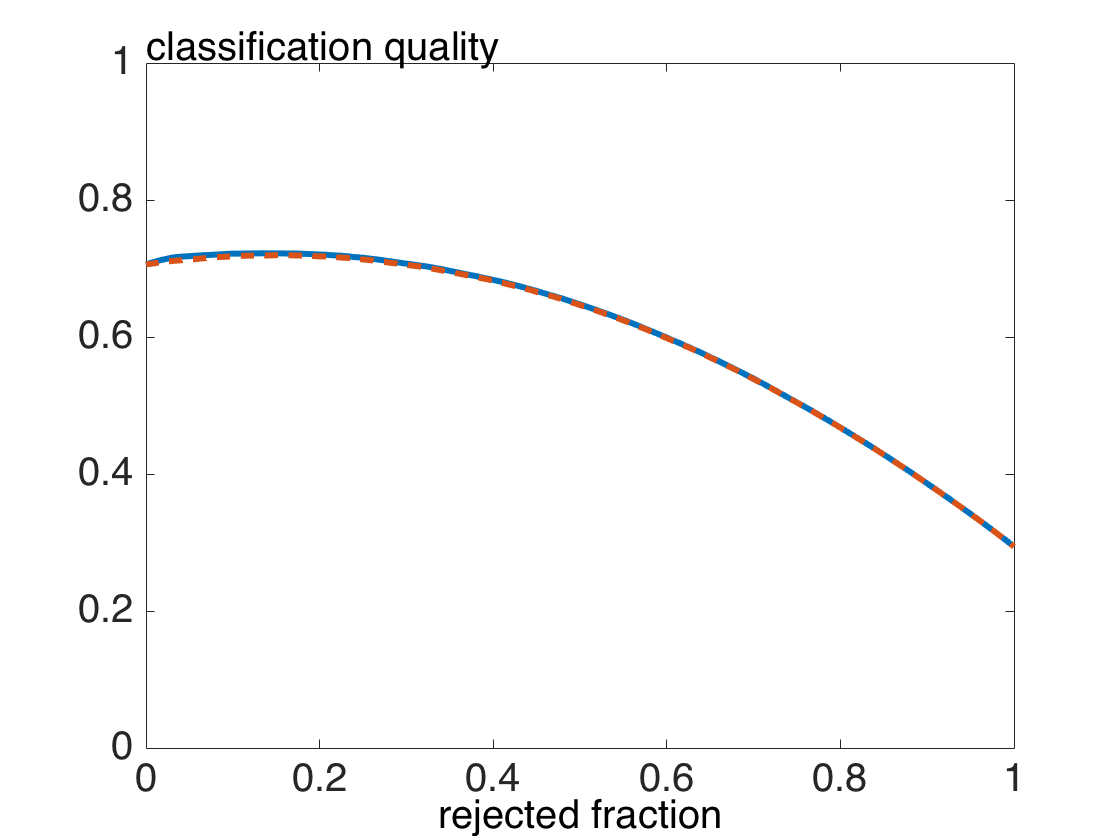}&
\hspace{-0.2in} \includegraphics[width=.35\columnwidth]{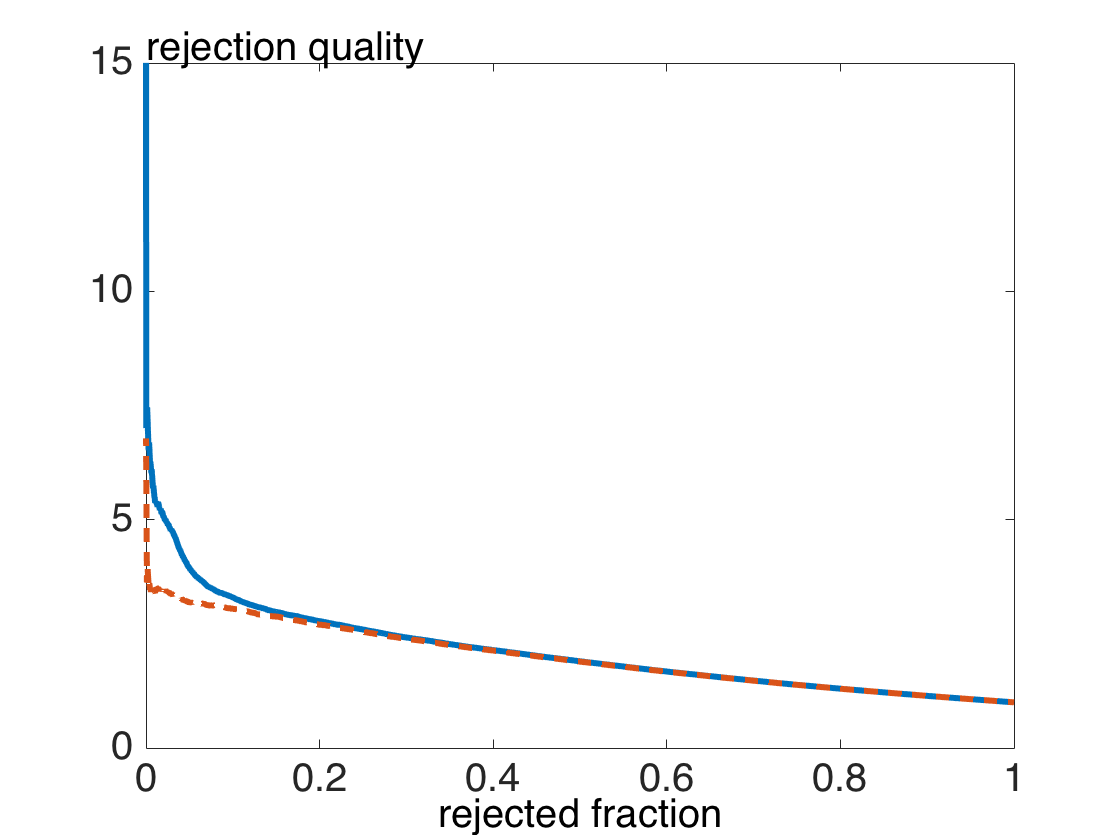} \\
\fns (a) Nonrejected accuracy &
\fns (b) Classification quality &
\fns (c) Rejection quality  \\
\end{tabular}
\caption{\label{fig:synthetic_data_measures}
Performance measures as a function of the rejected fraction for the synthetic example and the maximum probability rejector (solid blue line), and the breaking ties (dashed red line).
}
\end{center}
\end{figure*}

\begin{figure}[htpb]
\begin{center}
\begin{tabular}{cc}
\includegraphics[width=.5\columnwidth]{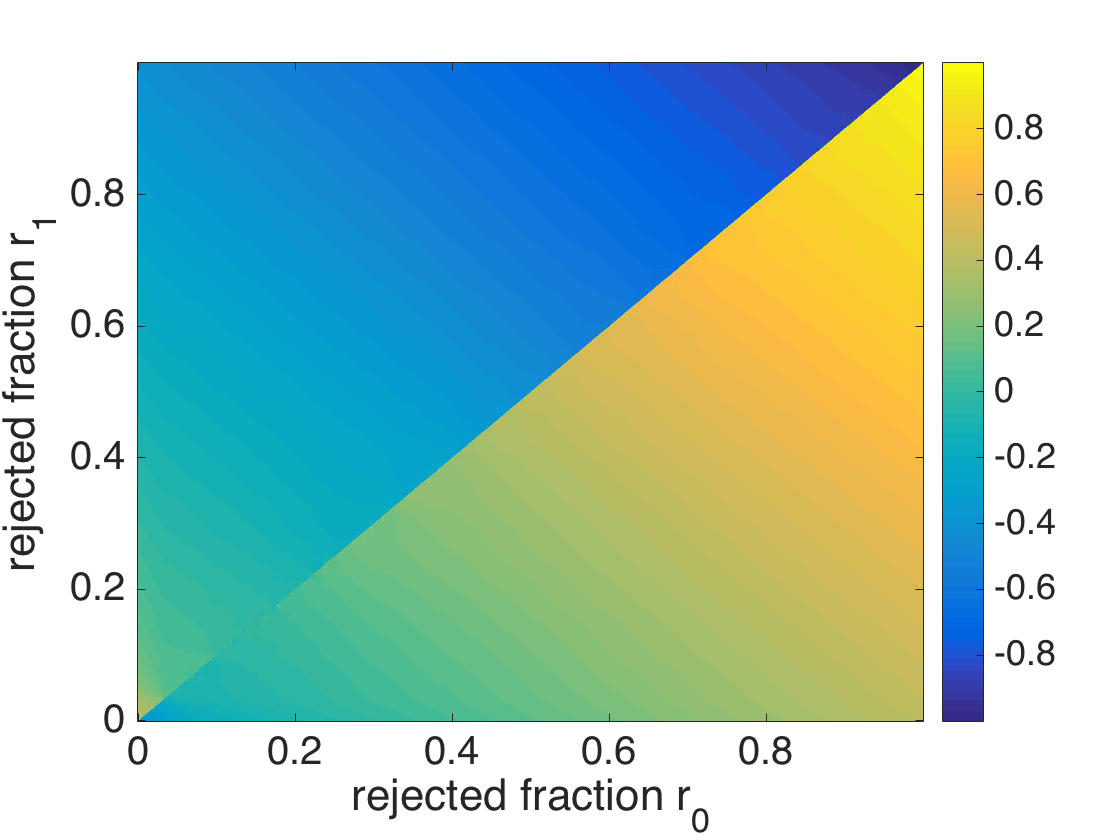} &
\hspace{-0.5cm}\includegraphics[width=.5\columnwidth]{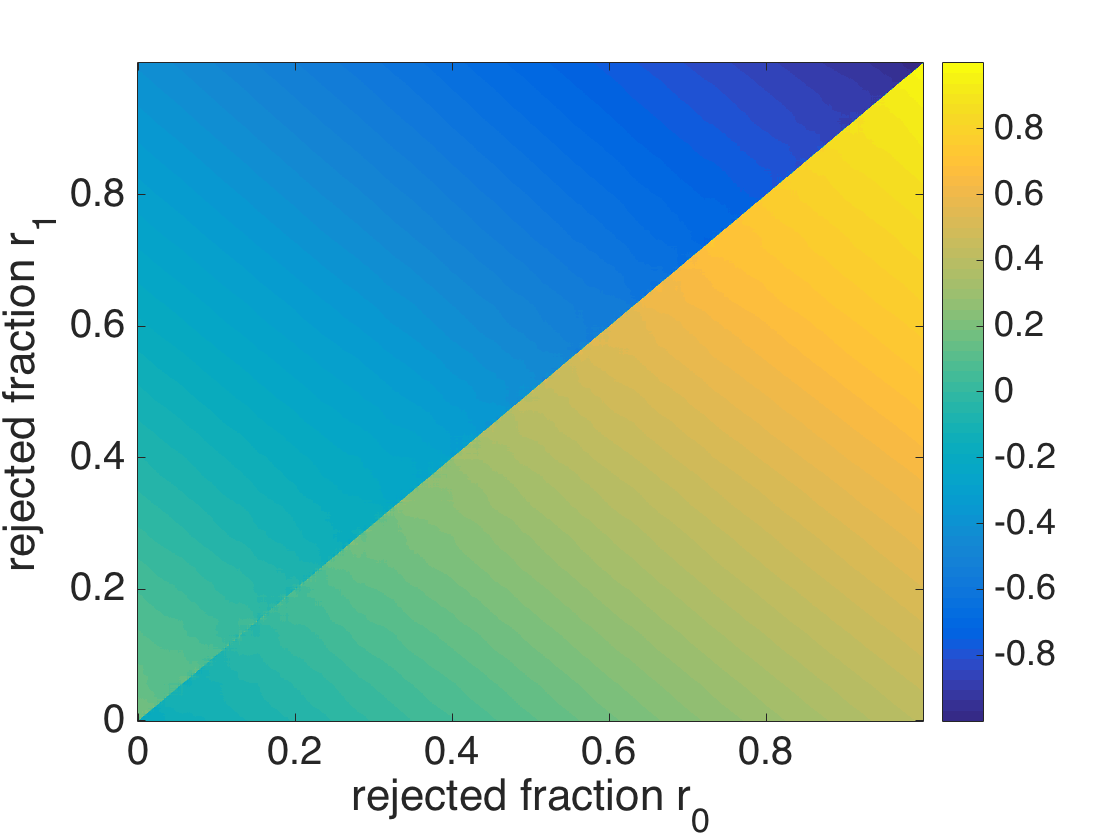} \\ 
\fns (a) \emph{Maximum probability rejector} &
\fns (b) \emph{Breaking ties rejector} 
\end{tabular}
\caption{\label{fig:synthetic_data_rel_opt}
Relative optimality computed for all possible pairs of operating points of (a) \emph{maximum probability rejector} and (b) \emph{breaking ties rejector}
}
\end{center}
\end{figure}

The relative optimality plots for both rejectors are present in Fig.~\ref{fig:synthetic_data_rel_opt}. For each possible operating point of the rejector, for simplicity defined only by the rejected fraction, we compute the relative optimality of all other operating points of the rejector.
We note that, for both rejectors, the operating point that corresponds to the maximum classification quality, has a nonnegative relative optimality with regards to all other operating points.
This relative optimality plot is of particular interest for parameter selection.

\subsection{Hyperspectral image data}
In hyperspectral image classification, the use of context, through the form of spatial priors, is widespread, providing significant performance improvements.
This means that, after classification, a computationally  expensive procedure is applied to classifier output to take into account contextual effects.
The use of accuracy-rejection curves might not be feasible, as changes in the rejected fraction often imply a computationally expensive context computation procedure. 
Thus, due to the joint use of context and rejection, and the high computational costs associated, this is a perfect environment for the use of the performance measures.

\begin{figure}[htb]
\begin{center}
\begin{tabular}{cc}
\includegraphics[width=.45\columnwidth]{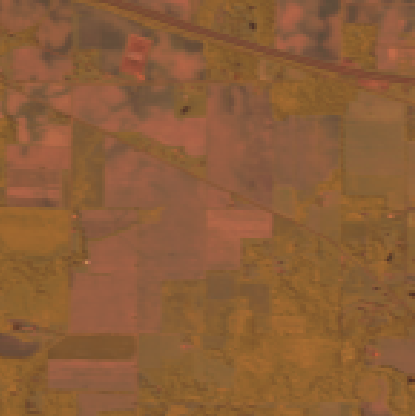} &
\includegraphics[width=.45\columnwidth]{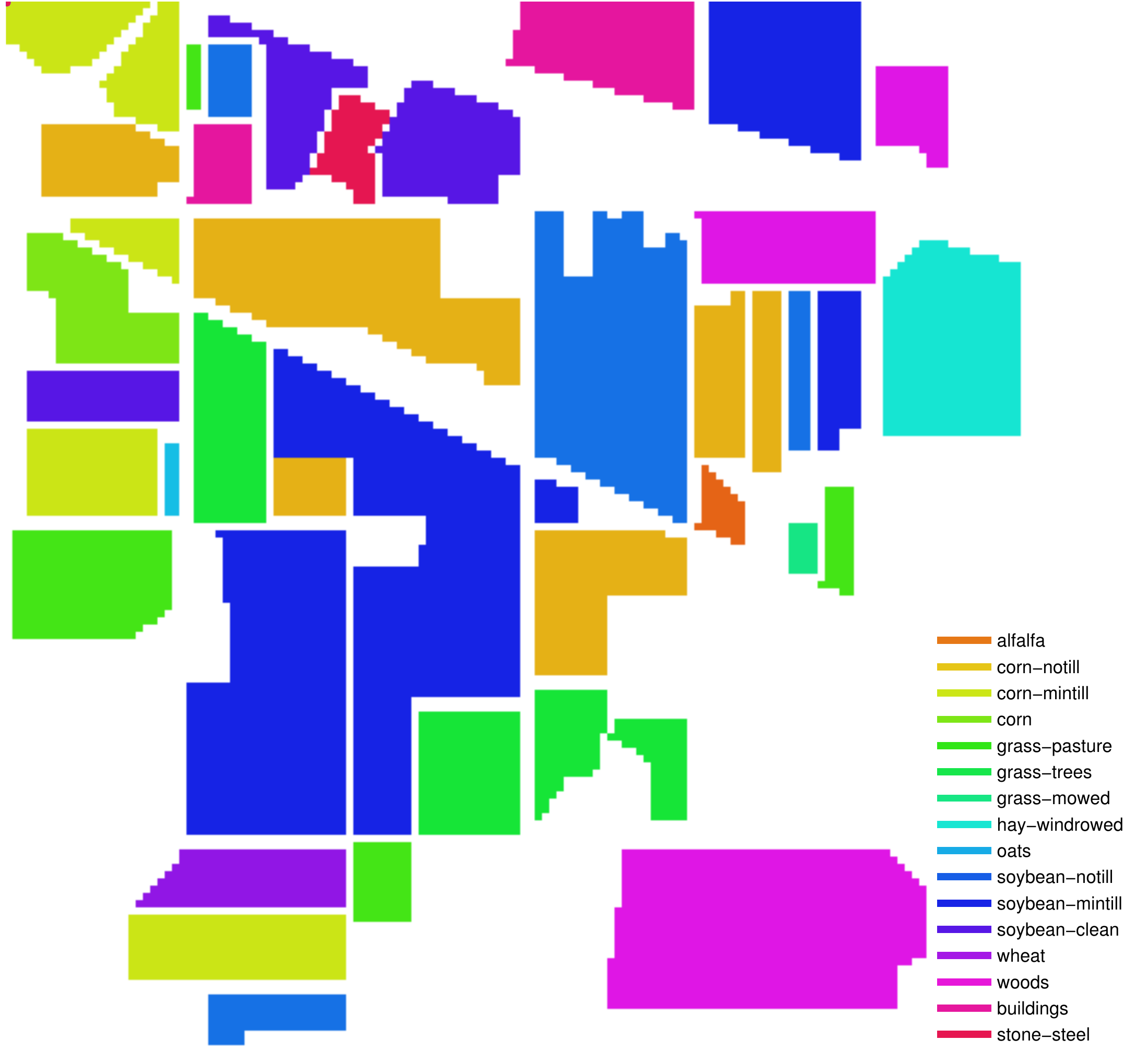} \\
\fns (a) False color composition  &
\fns (b) Ground truth \\
\includegraphics[width=.45\columnwidth]{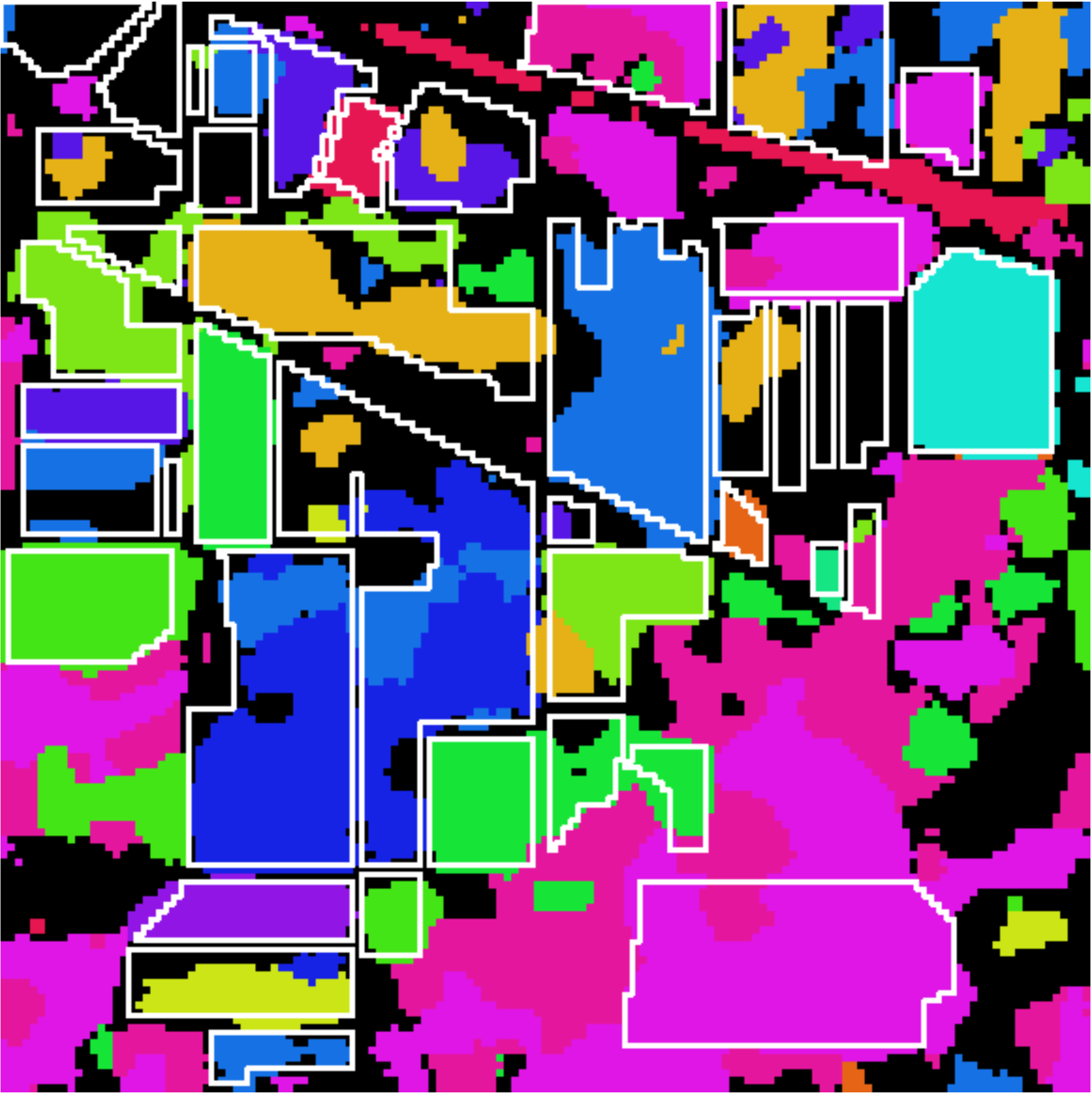} &
\includegraphics[width=.45\columnwidth]{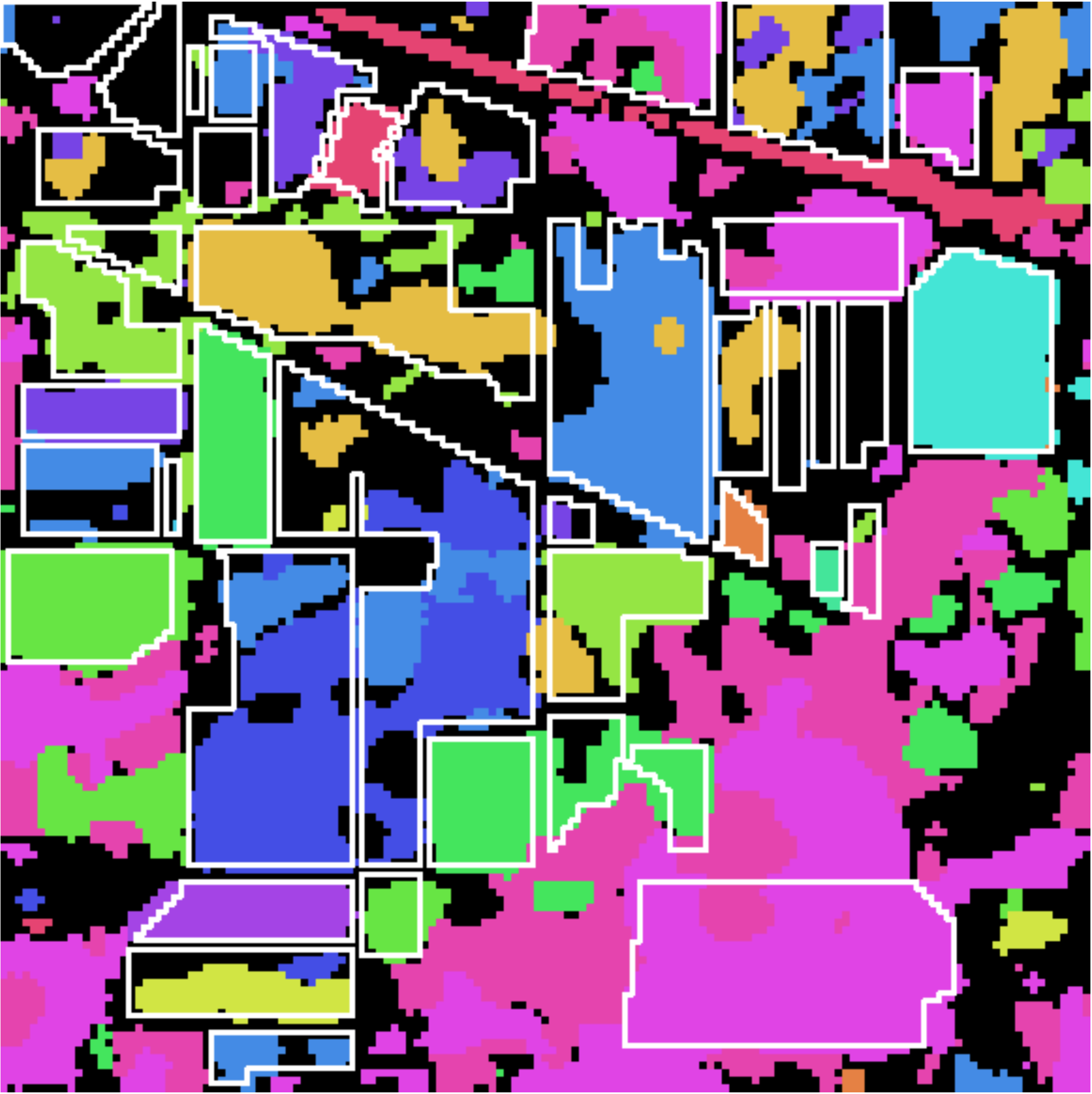} \\
\fns (c) SegSALSA-JCR &
\fns (d) SegSALSA-SCR \\
\end{tabular}
\caption{\label{fig:real_data_figs}
Indian Pine scene. (a) False color composition, (b) ground truth, (c) classification with context and rejection with SegSALSA-JCR, and (d) classification with context and rejection with SegSALSA-SCR.
}
\end{center}
\end{figure}

We use the algorithms for hyperspectral image classification with context and rejection presented in ~\cite{CondessaBK:15d} in their joint (JCR) and in their sequential (SCR), versions, respectively.
Both JCR and SCR are based on SegSALSA (Segmentation by Split Augmented Lagrangian Shrinkage Algorithm).
SegSALSA consists of a soft supervised classifier assigning a class probability to each pixel of the image, followed by the application of context through the computation of the marginal maximum \emph{a posteriori} of a continuous hidden field that driving the class probabilities, with a smoothness promoting prior applied on the continuous hidden field.
For more details on the SegSALSA, see~\cite{BioucasDiasCK:14}.

\begin{figure*}[htb]
\begin{center}
\begin{tabular}{ccc}
\hspace{-0.2in} \includegraphics[width=.35\columnwidth]{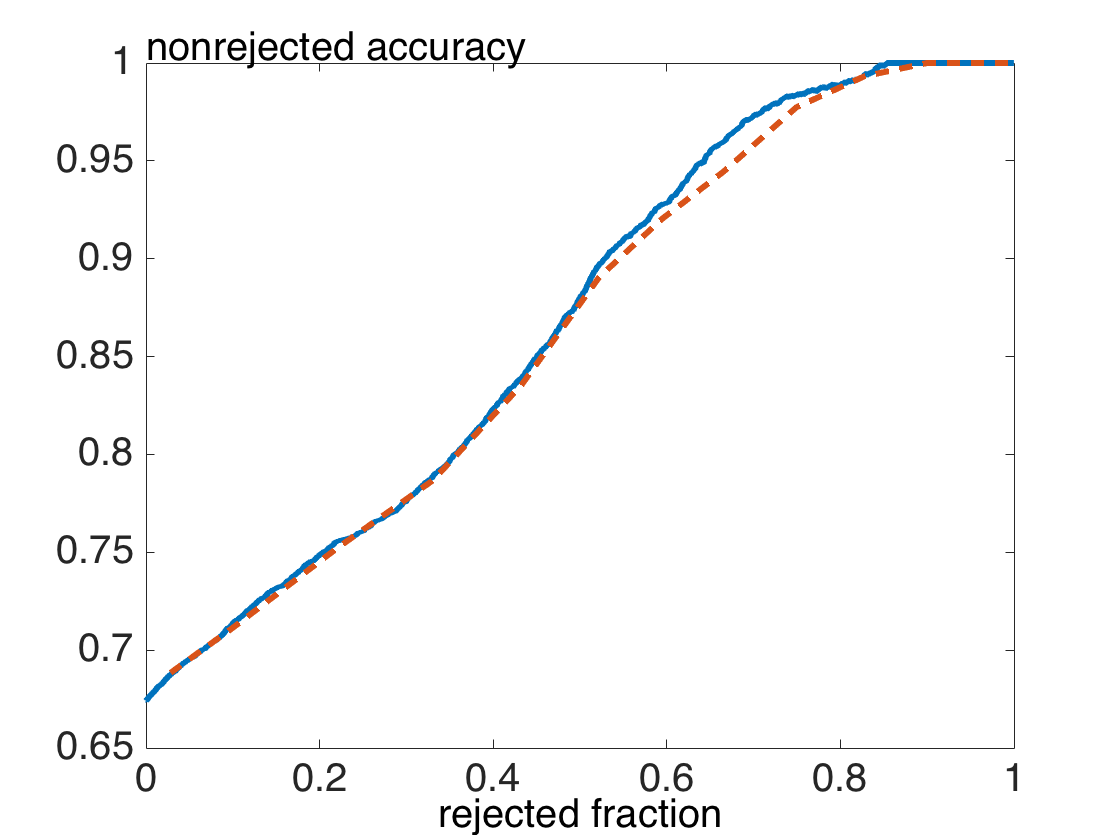} &
\hspace{-0.2in} \includegraphics[width=.35\columnwidth]{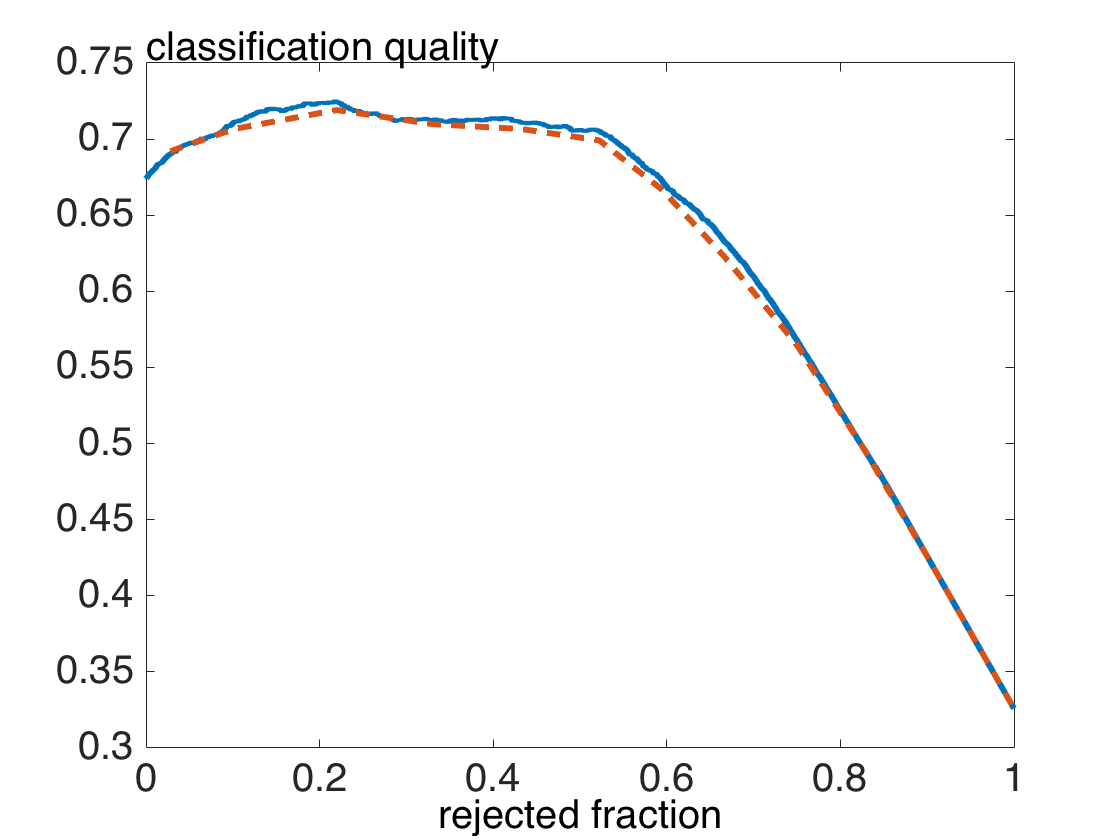}&
\hspace{-0.2in} \includegraphics[width=.35\columnwidth]{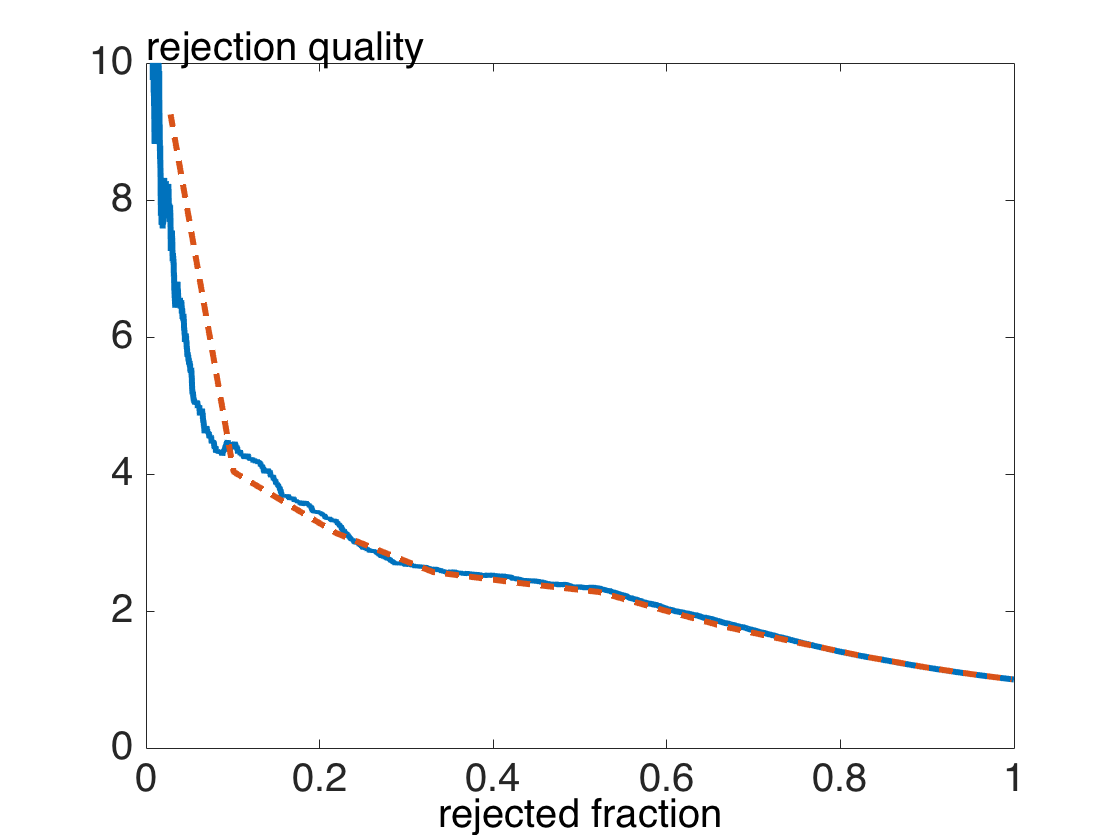} \\
\fns (a) Nonrejected accuracy  &
\fns (b) Classification quality &
\fns (c) Rejection quality \\
\end{tabular}
\caption{\label{fig:real_data_measures}
Performance measures as a function of the rejected fraction for the Indian Pine scene and the SegSALSA-JCR rejector (dashed red line), and SegSALSA-SCR rejector (solid blue line).
}
\end{center}
\end{figure*} 

We can now introduce rejection as an extra class that models probability of classifier failure, resulting in a joint computation of context and rejection, the JCR version.
We consider that the probability of failure is constant for all the pixels of the hyperspectral image, leading to a uniform weighting of the samples.
The higher the probability of failure, the larger the rejected fraction.
However, it is not possible to define \emph{a priori} the amount of rejected fraction obtained, and any change in the value of rejected fraction implies the recomputation of the SegSALSA algorithm.

On the other hand, we can harness the hidden fields resulting from the SegSALSA algorithm to obtain an ordering of the pixels in the classification with context according to their confidence.
This results in a very fast rejection scheme that takes in account rejection, resulting from approximations to the problem of joint computation of context and rejection, following a sequential approach to context and rejection, the SCR version.

We apply the SegSALSA-JCR and the SegSALSA-SCR to the classification of a well known benchmark image in the hyperspectral community, the AVIRIS Indian Pine scene\footnote{We thank Prof. Landgrebe at Purdue University for providing the AVIRIS Indian Pines scene to the community.}, as shown in Fig.~\ref{fig:real_data_figs}.
The scene consists of a $145\times 145$ pixel section with $200$ spectral bands (the water absorption bands are removed) and contains $16$ nonmutually exclusive classes.

Following the approach in~\cite{CondessaBK:15d}, we learn the class models using a sparse logistic regression with a training set composed of $10$ samples per class, and, for the joint approach, perform a parameter sweep on the probability of classifier failure, obtaining various operating points of the JCR rejector.
For the SCR rejector, as we define \emph{a posteriori} the rejected fraction, obtaining operating points of the SCR rejector is simply obtained by rejecting the fraction of pixels with the least amount of confidence (smaller value of the posterior probability on of the hidden field).
See~\cite{CondessaBK:15d} for a detailed explanation of the JCR and SCR schemes for rejection with context.

As seen in Fig.~\ref{fig:real_data_measures}, it is clear that, by looking at the accuracy rejection curves alone, it is trivial to compare the performance of the two rejectors when working at the same rejected fraction.
However, we cannot draw any conclusions on which is the best operating point of each rejector, or how they compare to each other.
By looking at the classification quality, it is clear where the maximum number of correct decisions is made for each of the rejectors, and by looking at the rejection quality we can observe that there is a significant improvement with reject options for lower values of the rejected fraction.

\begin{figure}[htb]
\begin{center}
\begin{tabular}{cc}
\includegraphics[width=0.5\columnwidth]{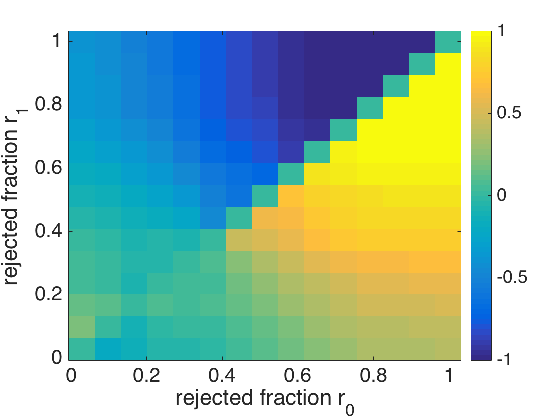} &
\includegraphics[width=0.5\columnwidth]{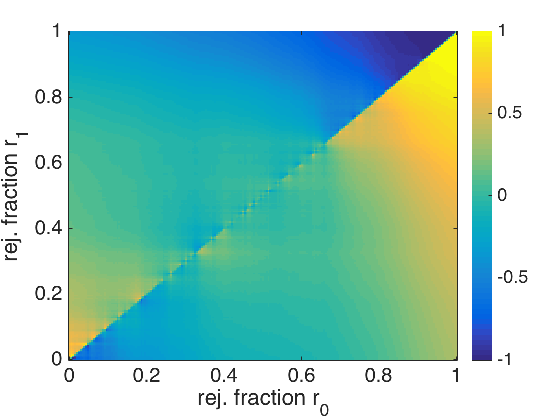} \\
\fns (a) Relative optimality JCR &
\fns (b) Relative optimality SCR \\
\includegraphics[width=0.5\columnwidth]{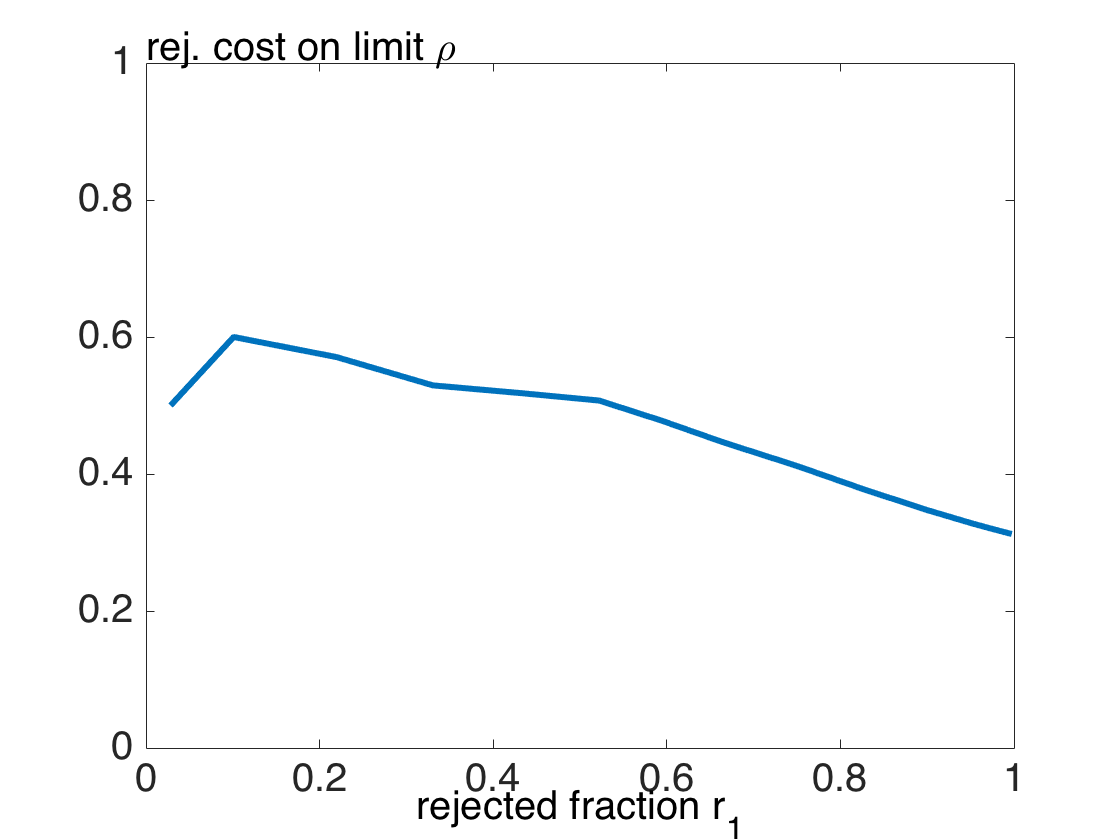} &
\includegraphics[width=0.5\columnwidth]{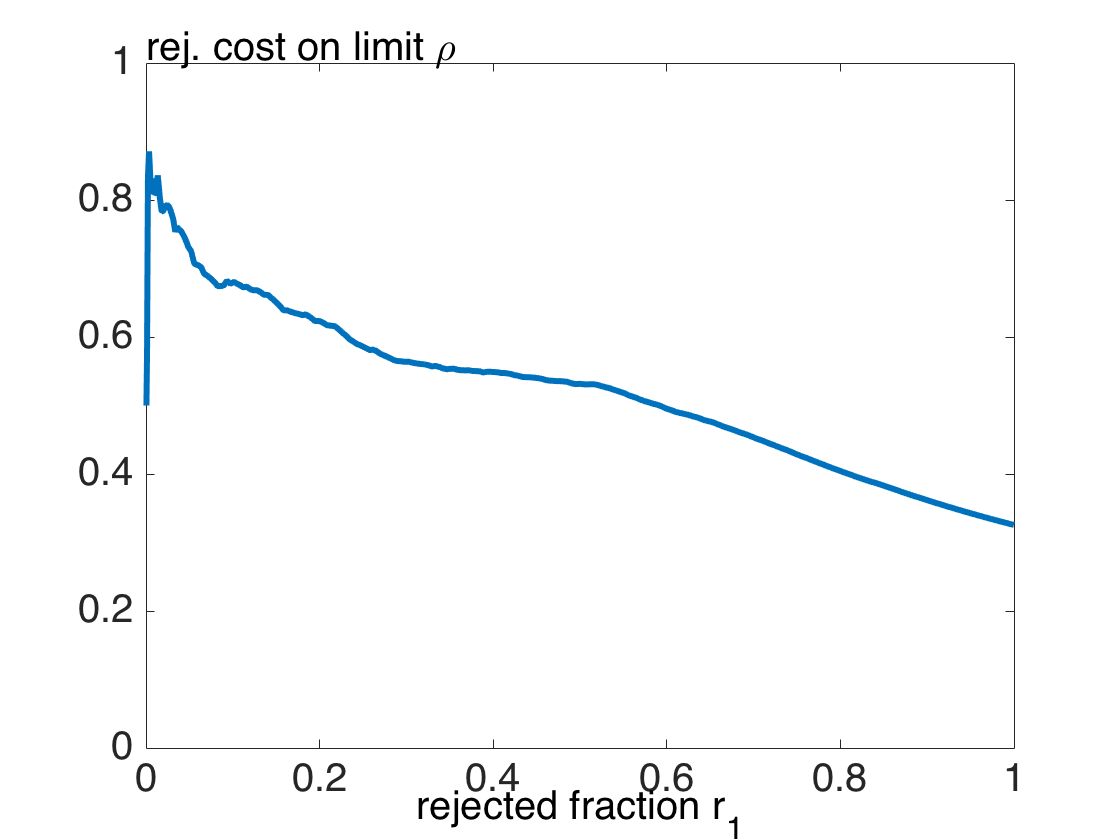} \\
\fns (c) Minimum rejection cost for &
\fns (d) Minimum rejection cost for \\
\fns no rejection JCR &
\fns no rejection SCR \\
\end{tabular}
\caption{\label{fig:real_data_rel_opt}
Relative optimality computed for all possible pairs of operating points of (a) JCR and (b) SCR rejectors, and minimum value of $\rho$ in the cost function~\eqref{eq:loss_function}, for (c) JCR and (d) SCR rejectors, such that it is better \emph{not} to reject ($r_0 = 0$), for each operating point.
}
\end{center}
\end{figure} 

Fig.~\ref{fig:real_data_rel_opt} shows  the relative optimality between each pair of operating points for each of the rejectors.
Using \eqref{eq:loss_relative_th}, for a given reference operating point and for any test operating point, we can obtain the minimum value of $\rho_0$ in the cost function~\eqref{eq:loss_function} such that the cost function at the test operating point is smaller than than the cost function at the reference operating point.
This means that, for any cost function with $\rho < \rho_0$, the test operating point is better than the reference operating point.
We perform such analysis in Fig.~\ref{fig:real_data_rel_opt}, where we set the reference operating point as $r_0 =0$, meaning no rejection.
For each possible value of rejected fraction $r_1$, we then test what the minimum value of the $\rho$ such that \emph{no rejection} is a better option than rejecting a fraction $r_1$ would be, according to the operating points defined by the two rejectors.


\section{Conclusions}
\label{sec:conclusion}
We introduced a set of measures to quantify performance of classifiers with rejection.
We then applied these performance measures to classifiers with rejection on both synthetic and real-world (hyperspectral image) data.
Furthermore, we connected the performance measures presented with general cost functions through the concept of relative optimality.


\bibliographystyle{IEEEbib}

\clearpage
\section{Appendix}

\subsection{Properties}
Let us consider a classifier $C$ and two different rejectors $R_1$ and $R_2$.
\subsubsection{Nonrejected accuracy $\A$}
\paragraph{Property I} The nonrejected accuracy is a function of the number of rejected samples \eqref{eq:aasr}.
\paragraph{Property II} For the same rejected fraction $r$, we have that if the nonrejected accuracy for $R_1$ is greater than the nonrejected accuracy for $R_2$, then
\begin{equation*}
A_{R_1}(r) = \frac{\| \avector_{\bm \N_{R_1}} \|}{(1-r)n} > \frac{\| \avector_{\bm \N_{R_2}} \|}{(1-r)n} = A_{R_2}(r) \qed
\end{equation*} meaning $ R_1$ outperforms $R_2$.
\paragraph{Property III} If $R_1$ outperforms $R_2$, for different rejected fractions $r_1>r_2$, then $\| \avector_{\bm \N_{R_1}} \| = \| \avector_{\bm \N_{R_2}} \|$, leading to 
\begin{equation*}
A_{R_1}(r_1) = \frac{\| \avector_{\bm \N_{R_1}} \|}{(1-r_1) n} = \frac{\| \avector_{\bm \N_{R_2}} \|}{(1-r_1) n} >\frac{\| \avector_{\bm \N_{R_2}} \|}{(1-r_2) n} = A_{R_2}(r_2) \qed
\end{equation*}
If $R_1$ outperforms $R_2$, for different rejected fractions $r_1<r_2$, then $\| \bm{1} - \avector_{\bm \N_{R_1}} \| = \| \bm{1} -  \avector_{\bm \N_{R_2}} \|$.
\begin{align}
\| \bm{1} - \avector_{\bm \N_{R_1}} \| =  \| \bm{1} -  \avector_{\bm \N_{R_2}} \| & \iff 
 (1-r_1) n - \|\avector_{\bm \N_{R_1}} \| =  (1-r_2) n - \|\avector_{\bm \N_{R_2}} \| &\iff \nonumber \\
 (1 - A_{R_1}(r_1)) = \frac{ (1 - r_2)}{ (1 - r_1)}  - \frac{ (1 - r_2) \|\avector_{\bm \N_{R_2}} \|}{(1 - r_1) (1 - r_2)} & \iff 
 (1 - A_{R_1}(r_1))  =  \frac{ (1 - r_2)}{ (1 - r_1)} (1 - A_{R_2}(r_2)) & \iff \nonumber \\
 1  - A_{R_1}(r_1)  <  1 - A_{r_2}(r_2) & \iff 
 A_{R_1}(r_1) > A_{R_2}(r_2) \qed \nonumber
\end{align}

\paragraph{Property IV} The nonrejected accuracy achieves its maximum, $1$, when $\N = \mathcal{A}$ and $\mathcal{R} = \mathcal{M}$.
This maximum is not unique however.
Any selection of $\N$ such that $\N \subset \mathcal{A}$ achieves a maximum value of nonrejected accuracy. 
The minimum of the nonrejected accuracy, $0$, is achieved when $\N = \mathcal{M}$ and $\mathcal{R} = \mathcal{A}$.
Any selection of $\N$ such that $\N \subset \mathcal{M}$ achieves a minimum value of nonrejected accuracy.

\subsubsection{Classification quality $\Q$}

\paragraph{Property I}
As seen in \eqref{eq:qasr}, the classification quality is a function of the number of rejected samples.

\paragraph{Property II}
With representation of the classification quality in \eqref{eq:qasr}, we can note that, for the same rejected fraction $r$ if the classification quality for $R_1$ is higher than the classification quality for $R_2$, then
\begin{align}
Q_{R_1}(r)  > Q_{R_2}(r) &\iff 
2 A_{R_1}(r) (1-r) - A(0)  > 2 A_{R_2}(r) (1-r)  - A(0)  &\iff\nonumber \\
 A_{R_1} > A_{R_1} & \iff 
 \|\avector_{\bm \N_{R_1}} \|  >  \|\avector_{\bm \N_{R_2}} \|  & \qed \nonumber
\end{align}

\paragraph{Property III}
If $R_1$ outperforms $R_2$, for different rejected fractions $r_1 > r_2$, then $\|\avector_{\bm \N_{R_1}}\| = \|\avector_{\bm \N_{R_2}}\|$, and
\begin{align}
n Q_{R_1}(r_1) = \|\avector_{\bm \N_{R_1}} \| + \| 1- \avector_{\bm \R_{R_1}} \| = 
\| \avector_{\bm \N_{R_1}}  \| +  |{\bm \R_{R_1}} | - \|\avector_{\bm \R_{R_1}} \| = \nonumber \\
\|\avector_{\bm \N_{R_1}} \| + r_1 n - \|\avector \|  + |\avector_{\bm \N_{R_1}} \| > 
\|\avector_{\bm \N_{R_1}} \| + r_2 n - \|\avector \|  + |\avector_{\bm \N_{R_1}} \| = \nonumber \\
\|\avector_{\bm \N_{R_2}} \| + r_1 n - \|\avector \|  + |\avector_{\bm \N_{R_2}} \| = n Q_{R_2}(r_2) \qed \nonumber
\end{align}
If $R_1$ outperforms $R_2$, for different rejected fractions $r_1 < r_2$, then $\|1 - \avector_{\bm \N_{R_1}}\| = \|1 - \avector_{\bm \N_{R_2}}\|$, and
\begin{align}
n Q_{R_1}(r_1) = \|\avector_{\bm \N_{R_1}} \| + \|1 -  \avector_{\bm \R_{R_1}}  \| = 
\|\avector_{\bm \N_{R_1}} \| + |\bm \R_{R_1} |  - \|\avector_{\bm \R_{R_1}} \| = \nonumber \\
|\bm \N_{R_1} | + |\bm \R_{R_1} | - \|1- \avector_{\bm \N_{R_1}} \| - (\|\avector\| - \|\avector_{\bm \N_{R_1}} \|) = \nonumber \\
|\bm \N_{R_1} | + |\bm \R_{R_1} |- \|1- \avector_{\bm \N_{R_1}} \|  - \|\avector\| + |\bm \N_{R_1}| -  \|1 -\avector_{\bm \N_{R_1}} \| = \nonumber \\
n - A(0) +  |\bm \N_{R_1}|- 2 \|1- \avector_{\bm \N_{R_1}} \|  > 
n - A(0) +  |\bm \N_{R_2}|- 2 \|1- \avector_{\bm \N_{R_1}} \|  = \nonumber \\
n - A(0) +  |\bm \N_{R_2}|- 2 \|1- \avector_{\bm \N_{R_2}} \|  = n Q_{R_2}(r_2) \qed \nonumber
\end{align}

\paragraph{Property IV}
The classification quality achieves its unique maximum, $1$, if $\mathcal{A} = \mathcal{N}$ and $\mathcal{M} = \mathcal{R}$.
Conversely, it achieves its unique minimum, $0$, if$\mathcal{A} = \mathcal{R}$ and $\mathcal{M} = \mathcal{N}$.

\subsubsection{Rejection quality $\PH$}
\paragraph{Property I}
Let $B(r)$ denote the rejected accuracy $\|a_{\R}\| / |\R|$, we have
\begin{align}
A(r) (1-r) + B(r) r = A(0) \iff 
B(r) = \frac{A(0) - A(r) (1-r)  }{r}\nonumber
\end{align}
We can represent $\phi$ as 
\begin{align}
\phi = \frac{\|1- \avector_{\bm \R} \|}{\|\avector_{\bm \R} \|} \frac{\|\avector \|}{\|1- \avector \|} = \frac{1-B(r)}{B(r)} \frac{A(0)}{1-A(0)}  
\frac{r - A(0) + A(r) (1-r) }{A(0) - A(r) (1-r)}\frac{A(0)}{1 - A(0)} = \phi(r) \nonumber \qed
\end{align}

\paragraph{Property II}
For the same rejected fraction $r$, we have that if the rejection quality for $R_1$ is greater than the rejection quality for $R_2$, then
\begin{align}
\phi_{R_1}(r) > \phi_{R_2}(r) \iff
 \frac{\|1- \avector_{\bm \R_{R_1}} \|}{\|\avector_{\bm \R_{R_1}} \|} \frac{\|\avector \|}{\|1- \avector \|} >  \frac{\|1- \avector_{\bm \R_{R_2}} \|}{\|\avector_{\bm \R_{R_2}} \|} \frac{\|\avector \|}{\|1- \avector \|} 
\frac{\|1- \avector_{\bm \R_{R_1}} \|}{\|\avector_{\bm \R_{R_1}} \|}>  \frac{\|1- \avector_{\bm \R_{R_2}} \|}{\|\avector_{\bm \R_{R_2}} \|} \iff\nonumber \\
\frac{|\R_{R_1}| -\| \avector_{\bm \R_{R_1}} \|}{\|\avector_{\bm \R_{R_1}} \|}>  \frac{|\R_{R_2}|  - \| \avector_{\bm \R_{R_2}} \|}{\|\avector_{\bm \R_{R_2}} \|} 
\frac{|\R_{R_1}| -\| \avector_{\bm \R_{R_1}} \|}{\|\avector_{\bm \R_{R_1}} \|}>  \frac{|\R_{R_2}|  - \| \avector_{\bm \R_{R_2}} \|}{\|\avector_{\bm \R_{R_2}} \|} 
\frac{|\R_{R_1}| }{\|\avector_{\bm \R_{R_1}} \|} - 1>  \frac{|\R_{R_2}|  }{\|\avector_{\bm \R_{R_2}} \|} - 1 \iff\nonumber \\
\frac{\|\avector_{\bm \R_{R_1}} \|}{|\R_{R_1}| } <  \frac{\|\avector_{\bm \R_{R_2}} \|}{|\R_{R_1}|  } \iff
\|\avector\| - \|\avector_{\bm \N_{R_1}} \| < \|\avector\| -   \|\avector_{\bm \N_{R_2}} \| \iff   
\|\avector_{\bm \N_{R_1}} \| > \|\avector_{\bm \N_{R_2}} \| \qed \nonumber
\end{align}

\paragraph{Property III}
If $R_1$ outperforms $R_2$, for different rejected fractions $r_1 > r_2$, then $\|\avector_{\bm \N_{R_1}}\| = \|\avector_{\bm \N_{R_2}}\|$. As $\|\avector \| = \|\avector_\N\| + \|\avector_\R\|$ and $r_1 > r_2$, we have $\|\avector_{\bm \R_{R_1}}\| = \|\avector_{\bm \R_{R_2}} \|$ and $| \R_{R_1}| > |\R_{R_2}|$ respectively, leading to
\begin{align}
\phi_{R_1}(r_1) = \frac{\|1- \avector_{\bm \R_{R_1}} \|}{\|\avector_{\bm \R_{R_1}} \|} \frac{\|\avector \|}{\|1- \avector \|}  = \left( \frac{|\R_{R_1}|}{\|\avector_{\bm \R_{R_1}} \|}  - 1\right)\frac{\|\avector \|}{\|1- \avector \|} > 
> \left( \frac{|\R_{R_2}|}{\|\avector_{\bm \R_{R_1}} \|}  - 1\right) \frac{\|\avector \|}{\|1- \avector \|} =
\nonumber \\
\frac{\|1- \avector_{\bm \R_{R_2}} \|}{\|\avector_{\bm \R_{R_2}} \|} \frac{\|\avector \|}{\|1- \avector \|} =  \phi_{R_2}(r_2) \qed \nonumber
\end{align}
If $R_1$ outperforms $R_2$, for different rejected fractions $r_1 < r_2$,\emph{i.e.} $|\N_{R_1}| > |\N_{R_2}|$,  then $\|1 - \avector_{\bm \N_{R_1}}\| = \|1 - \avector_{\bm \N_{R_2}}\|$.
This means that $\|1 - \avector_{\bm \R_{R_1}}\| = \|1 - \avector_{\bm \R_{R_2}}\|$  and $|\R_1| < |\R_2|$, 
\begin{align}
\phi_{R_1}(r_1) = \frac{\|1- \avector_{\bm \R_{R_1}} \|}{\|\avector_{\bm \R_{R_1}} \|} \frac{\|\avector \|}{\|1- \avector \|}  = 
\frac{\|1- \avector_{\bm \R_{R_1}} \|}{| \R_{R_1}| - \|1- \avector_{\bm \R_{R_1}} \|} \frac{\|\avector \|}{\|1- \avector \|}  > 
\frac{\|1- \avector_{\bm \R_{R_1}} \|}{| \R_{R_2}| - \|1- \avector_{\bm \R_{R_1}} \|} \frac{\|\avector \|}{\|1- \avector \|}  = \nonumber \\
\frac{\|1- \avector_{\bm \R_{R_2}} \|}{| \R_{R_2}| - \|1- \avector_{\bm \R_{R_2}} \|} \frac{\|\avector \|}{\|1- \avector \|}  = \phi_{R_2}(r_2)\qed \nonumber
\end{align}

\paragraph{Property IV} 
The rejection quality achieves its maximum, $\infty$, when $\N = \mathcal{A}$ and $\mathcal{R}= \mathcal{M}$.
This maximum is not unique. Any selection of $\mathcal{R}$ such that $\mathcal{R} \subset \mathcal{M}$ results in maximum values of rejection quality.
Conversely, the rejection quality achieves its minimum, $0$, when $\mathcal{R} = \mathcal{A}$ and $\mathcal{N} = \mathcal{M}$.
This maximum is not unique. Any selection of $\mathcal{R}$ such that  $\mathcal{R} \subset \mathcal{A}$ results in minimum values of rejection quality.

\subsection{Comparing performance of classifiers with rejection}
Let us consider a classifier $C$ and two rejectors $R_1$ and $R_0$, with $r_1 > r_0$, and a cost function with a rejection-misclassification trade-off $\rho$.
Let $\beta$ be the relative optimality of the operating point of rejector $R_1$ at $r_1$ with respect to the reference operating point of $R_0$ at $r_0$.

From \eqref{eq:connect_loss}, and given the cost function with a rejection-misclassification trade-off $\rho$, we can relate outperformance, $\beta$ and $\rho$.

Rejector $R_1$ outperforms $R_0$ when
\begin{equation*}
\beta > 2 \rho - 1.
\end{equation*}

Rejector $R_0$ outperforms $R_1$ when
\begin{equation*}
\beta < 2 \rho - 1.
\end{equation*}

Rejector $R_0$ and $R_1$ are equivalent in terms of performance when
\begin{equation*}
\beta = 2 \rho - 1.
\end{equation*}

\subsubsection{Nonrejected accuracy $A$}
We can represent $A_{R_1}(r_1)$ as a function of $A_{R_0}(r_0)$ by noting that the best case scenario is 
\begin{equation*}
A_{R_1}(r_1) = A_{R_0}(r_0) \frac{1-r_0}{1-r_1} + \frac{r_1-r_0}{1-r_1},
\end{equation*}
corresponding to $\beta = 1$, and the worst case scenario is 
\begin{equation*}
A_{R_1}(r_1) = A_{R_0}(r_0) \frac{1-r_0}{1-r_1},
\end{equation*}
corresponding to $\beta = -1$.
This results in a representation of the nonrejected accuracy  $A_{R_1}(r_1)$ as
\begin{equation*}
A_{R_1}(r_1) = A_{R_0}(r_0) \frac{1-r_0}{1-r_1}  + \frac{\beta - 1}{2}\frac{r_1 - r_0}{1-r_1}.
\end{equation*}

Rejector $R_1$ outperforms $R_0$ when 
\begin{align}
A_{R_1}(r_1) = A_{R_0}(r_0) \frac{1-r_0}{1-r_1}  + \frac{\beta - 1}{2}\frac{r_1 - r_0}{1-r_1} 
>A_{R_0}(r_0) \frac{1-r_0}{1-r_1}  + (\rho - 1) \frac{r_1 - r_0}{1-r_1} .\nonumber
\end{align}

Rejector $R_0$ outperforms $R_1$ when 
\begin{align}
A_{R_1}(r_1) = A_{R_0}(r_0) \frac{1-r_0}{1-r_1}  + \frac{\beta - 1}{2}\frac{r_1 - r_0}{1-r_1}
<A_{R_0}(r_0) \frac{1-r_0}{1-r_1}  + (\rho - 1) \frac{r_1 - r_0}{1-r_1} .\nonumber
\end{align}

Rejector $R_0$ and $R_1$ are equivalent in terms of performance when
\begin{align}
A_{R_1}(r_1) = A_{R_0}(r_0) \frac{1-r_0}{1-r_1}  + \frac{\beta - 1}{2}\frac{r_1 - r_0}{1-r_1}
=A_{R_0}(r_0) \frac{1-r_0}{1-r_1}  + (\rho - 1) \frac{r_1 - r_0}{1-r_1} .\nonumber
\end{align}
\qed

\subsubsection{Classification quality $Q$}
We can represent $Q_{R_1}(r_1)$ as a function of $Q_{R_0}(r_0)$ by noting that the best case scenario is
\begin{equation*}
Q_{R_1}(r_1) = Q_{R_0}(r_0) + (r_1 - r_0),
\end{equation*}
corresponding to $\beta = 1$, and the worst case scenario is 
\begin{equation*}
Q_{R_1}(r_1) = Q_{R_0}(r_0) - (r_1 - r_0),
\end{equation*}
corresponding to $\beta = -1$.
This results in a representation of the classification quality $Q(R_1)(r_1)$ as
\begin{equation*}
Q_{R_1}(r_1) = Q_{R_0}(r_0) + \beta (r_1 - r_0),
\end{equation*}

Rejector $R_1$ outperforms $R_0$ when 
\begin{equation*}
Q_{R_1}(r_1) = Q_{R_0}(r_0) + \beta (r_1 - r_0) >  Q_{R_0}(r_0) + (2 \rho - 1) (r_1 - r_0).
\end{equation*}

Rejector $R_0$ outperforms $R_1$ when 
\begin{equation*}
Q_{R_1}(r_1) = Q_{R_0}(r_0) + \beta (r_1 - r_0) <  Q_{R_0}(r_0) + (2 \rho - 1) (r_1 - r_0).
\end{equation*}

Rejector $R_0$ and $R_1$ are equivalent in terms of performance when
\begin{equation*}
Q_{R_1}(r_1) = Q_{R_0}(r_0) + \beta (r_1 - r_0) =  Q_{R_0}(r_0) + (2 \rho - 1) (r_1 - r_0).
\end{equation*}
\qed
\end{document}